
\documentclass[journal]{IEEEtran}
%

\usepackage{enumerate}
\usepackage{mathtools}    
\newcommand\numberthis{\addtocounter{equation}{1}\tag{\theequation}}
\usepackage{algorithm}
\usepackage{algorithmic}
\usepackage{color,soulutf8}
\usepackage{xcolor}

\usepackage{amsmath} 
\usepackage{amsthm}
\usepackage{bm}
\usepackage{graphicx}
\usepackage{amsfonts}
\usepackage{relsize}
\usepackage{float}
\usepackage{subfigure}
\usepackage{subfigmat}
\usepackage{etoolbox}

\newtheorem{theorem}{Theorem}
\patchcmd{\subfigmatrix}{\hfill}{\hspace{0.8cm}}{}{}
\ifCLASSINFOpdf
\else
\fi
\hyphenation{op-tical net-works semi-conduc-tor}

\begin{document}
\title{Spatiotemporal Feature Learning for Event-Based Vision}
%
%
%
%

\author{Rohan~Ghosh,~\IEEEmembership{Student Member,~IEEE,}
        Anupam~K.~Gupta,
        Siyi~Tang,
        Alcimar B. Soares,
        and~Nitish~V.~Thakor,~\IEEEmembership{Life~Fellow,~IEEE}
\thanks{}}

\IEEEtitleabstractindextext{

\begin{abstract}
Unlike conventional frame-based sensors, event-based visual sensors output information through spikes at a high temporal resolution. By only encoding changes in pixel intensity, they showcase a low-power consuming, low-latency approach to visual information sensing. To use this information for higher sensory tasks like object recognition and tracking, an essential simplification step is the extraction and learning of features. An ideal feature descriptor must be robust to changes involving (i) local transformations and (ii) re-appearances of a local event pattern. To that end, we propose a novel spatiotemporal feature representation learning algorithm based on slow feature analysis (SFA). Using SFA, smoothly changing linear projections are learnt which are robust to local visual transformations. In order to determine if the features can learn to be invariant to various visual transformations, feature point tracking tasks are used for evaluation. Extensive experiments across two datasets demonstrate the adaptability of the spatiotemporal feature learner to translation, scaling and rotational transformations of the feature points. More importantly, we find that the obtained feature representations are able to exploit the high temporal resolution of such event-based cameras in generating better feature tracks.
\end{abstract}

\begin{IEEEkeywords}
Event-based vision, Feature Learning, Slow feature analysis
\end{IEEEkeywords}}

\maketitle

\IEEEdisplaynontitleabstractindextext

%

~\\

\IEEEraisesectionheading{\section{Introduction}\label{sec:introduction}}


\begin{figure}[h]
    \centering
\includegraphics[width=0.5\textwidth]{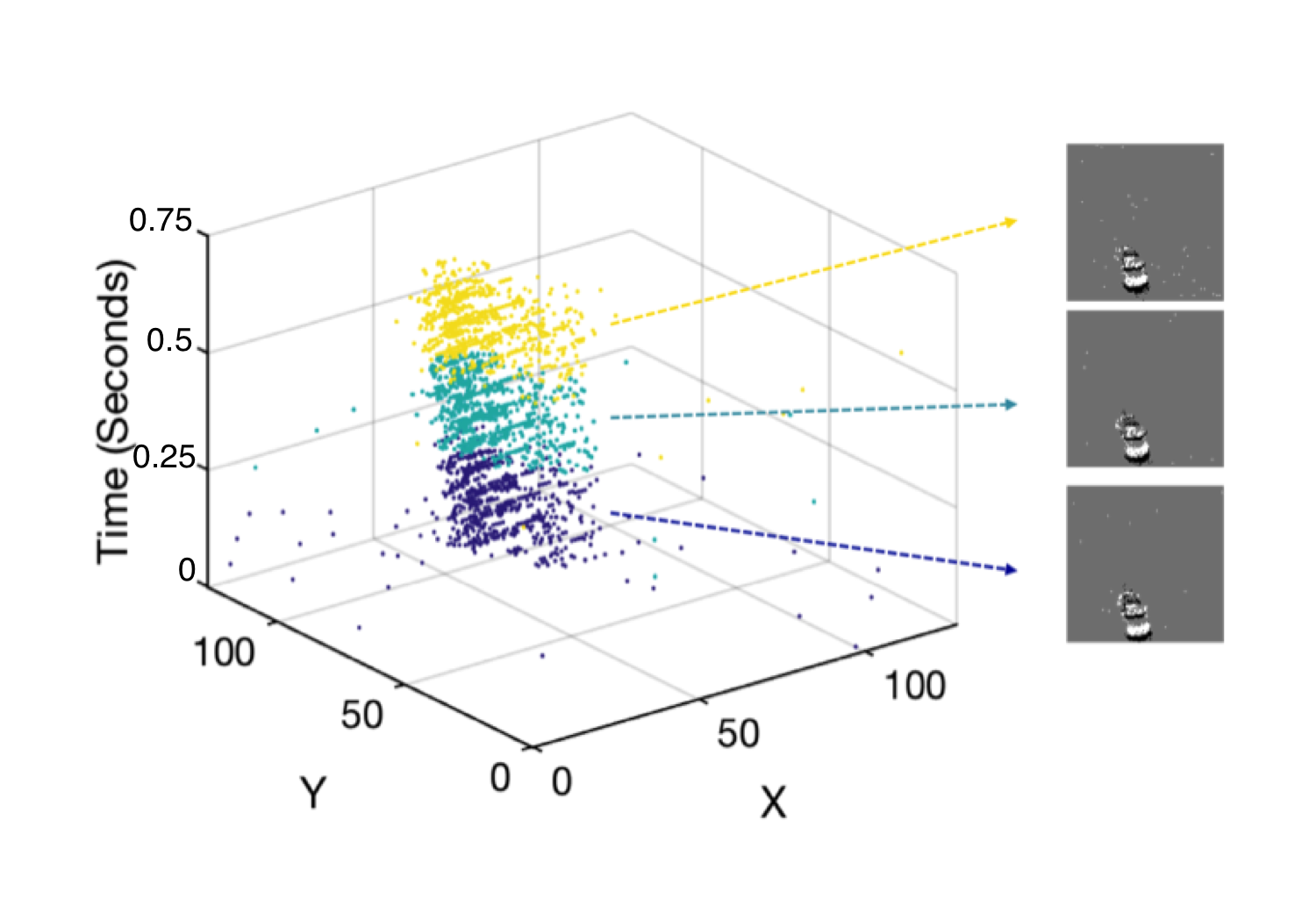}
\caption{Example spike-event data from the DVS sensor in response to a moving car on a road. Pixels produce spike-events as points in 3D $(X,Y,T)$ space, indicating the change of intensity at $(X,Y)$ at time $T$. The change is encoded as either incremental (white) or decremental (black), as observed in the images to the right. The images are created by binning a number of spike-events (color-coded). }
\label{fig:traffic_data}
\end{figure}
Feature extraction is an integral part of numerous applications in computer vision. It is common practice to compress the high dimensional information contained in digital images and videos into low dimensional features. This process aids in reducing the computational burden on subsequent algorithmic modules by only channelling relevant and interesting information.
Feature descriptors obtained from local image regions around \textit{interest points} (or feature points, e.g. \cite{lowe_keypoint}) act as robust transformation-invariant markers. Extraction and matching of feature descriptors are fundamental to applications such as simultaneous localization and mapping (SLAM) \cite{orb_slam}, depth estimation from stereo images \cite{stereo_surfaces}, object recognition \cite{hog_human}, motion segmentation \cite{ssc_original}, object tracking \cite{tracking_feature} and action recognition \cite{action_features}. 

Conventional, frame-based sensors perform synchronous sampling of the entire pixel array. In contrast, event-based, neuromorphic sensors only acquire information when the light intensity at a pixel shows a threshold amount of \textit{change}. Some examples of such event-based cameras include DVS \cite{dvs_original}, ATIS \cite{atis_original} and DAVIS \cite{davis_original}, which are primarily characterized by-
\begin{itemize}
\item Asynchrony: Each pixel outputs an event when the pixel intensity changes by a predetermined threshold. Such events are often denominated as \textit{spike-events}, due to their resemblance to neuronal spikes. 
\item Low data redundancy: Pixel intensity changes usually only occur at object edges, thereby, limiting the number of spike-events, and reducing data redundancy.
\item Low latency: The spike times are accurate to $ \sim 10  \mu $s, enabling almost continuous visual information acquisition. Furthermore, each spike is available for processing within 10 $\mu$s. 
\end{itemize}
 
 The data generated from event-based sensors are inherently spatiotemporal in the form of sparsely located spike-events. Intensity changes at a pixel can occur either due to global illumination changes or edge motion. Figure \ref{fig:traffic_data} (a) is an example of the point cloud generated from accumulating spike-events over a certain interval of time. In this instance, the events are generated due to the motion of a car, which is easier to spot in images created by the events (Figure \ref{fig:traffic_data} (b)). 
 
Extraction of features from such sparsely located spike-events presents an interesting challenge. First, one can observe a lot of noisy spike-events. Second, many of the edges have a very weak presence, due to their motion direction being parallel to the edge. Third, in spite of a very high time resolution of event-based sensors, the spike-event times will not be microsecond precise, decreasing the \textit{effective} temporal resolution by a order of 100 \cite{temporal_ryad}. Noisy spike-events are usually produced at a constant (small) rate, caused by Poisson shot noise fluctuating the light intensities at the pixel photoreceptors \cite{dvs_original}. The pattern of events produced in response to a moving visual feature (e.g. a corner) will thus exhibit a certain degree of variability each time the feature is presented. However, in spite of such issues, it is possible to find edge orientation, edge motion and detect corners with more precision and accuracy than a frame-based camera \cite{corner_event}, owing to the constructive use of the high temporal resolution of events.

Local feature descriptors with static images typically consider local image regions which have properties that distinguish them clearly from their neighbors \cite{local_features}. Often these desirable properties emerge from a non-uniform pixel intensity structure. Interesting and informative image regions usually form around spatial edges created by local intensity differences. Since event-based sensors only respond to intensity changes, the information contained in spike-events is naturally indicative of interesting features. As the spike-events essentially trace the moving edges in the scene, the features can be intrinsically tuned to edge shape and motion. 



Since there is no pixel-intensity information available from event-based sensors, the spike-timing information is instead used to generate potent feature representations. To that end, time-surfaces \cite{hots_garrick} are often employed as a local feature construction method. A time-surface is a 2-dimensional representation of the last time of occurrence of a spike-event at each pixel location in a local spatial neighborhood. A feature estimation framework comprising a hierarchical approach to encoding time-surface information was described in \cite{hots_garrick}. Features which occurred more frequently throughout the data, were prioritized. Neural network based methods for learning features were used in \cite{bichler_snn_car}. There, hierarchies of spiking neural networks were used to learn and extract temporally correlated features. Yet another way to extract features is through a predictive model of the spike-event data in a local spatiotemporal region, which was done in \cite{echostate_features}. Features were defined as discrete classes of spatiotemporal predictors which used echo-state networks. The method proposed there exploited spatiotemporal correlations between spike-events for better prediction of spike-event patterns, and used them for subsequent feature creation.  

It is evident from these works that capturing spatiotemporal correlations present in the spike-event patterns can be used to learn features useful for recognition and tracking applications. However, an aspect of feature construction absent in prior work is robustness to the various ways a certain visual feature could be presented to the camera. For example, a local feature of an object might view differently depending on the scale, position and pose of the object. Since event-based cameras respond to motion, an additional aspect of variation would be the speed of the object itself. In this work, we primarily wish to address the problem of estimation of features which are robust to such transformations, and are inherently spatiotemporal in nature. 

Instead of computing handcrafted feature representations and subsequently using them for feature matching, an alternative, used in this work, is to \textit{learn} optimal representations from matching pairs of interest points which have similar patterns. The local pixel-intensity pattern around an interest point (e.g. a corner) can show significant changes in time with change of scale and pose. It is harder to perform feature matching when the object has undergone significant changes in pose and scale. In contrast, matching feature points between two consecutive viewings of an object is much easier, where those changes can be assumed to be less pronounced. Motivated by this, some methods in frame-based vision report a \textit{track and learn} strategy \cite{unsupervised_wang,siamese_face,tld_tracking,tlm_tracking}, where multiple pairs of tracked interest points across frames are used to learn invariant feature representations. The inherent assumption is that features computed from two consecutive viewings of an interest point (or an object) must not show large differences. Such global object features may be computed with a convolutional neural network architecture \cite{siamese_face} as in\cite{unsupervised_wang}. The counterpart for local feature points is the tracking-learning-matching technique proposed in \cite{tlm_tracking}, where invariant local features are learnt.



Here, we propose a technique for learning invariant local feature representations from event data, similar to \cite{tlm_tracking} and \cite{sfa_original}. While the features obtained in those works were primarily spatial, in our method they are spatiotemporal, and therefore capture both shape and motion properties present in the local spatiotemporal event pattern. The contributions of this work are three-fold.

First, a novel feature learning algorithm is described, which prioritizes smooth but informative representations. We use a novel approach to obtain feature representations from event-based data, based on constraining the changes in the feature values to the temporal evolution of a spatiotemporal event pattern. Specifically, slow feature analysis \cite{sfa_original} is used to arrive at the spatiotemporal weights (analogous to receptive fields) required for feature generation. Contrary to any previous work, we directly learn features from raw event-data, with only enforcing a slowness constraint on the feature value. Slow feature analysis (SFA, \cite{sfa_original}) proposes a feature learning paradigm where only features changing slowly in time are prioritized, as they are more likely to be robust to transformations. SFA has been successfully applied to frame-based vision, where the method learns features which are robust to various kinds of transformations, while being informative about the visual pattern. Here we propose a spatiotemporal domain SFA approach, that learns features ground-up from raw spike-event data.

Second, a feature point tracking algorithm is proposed, just to evaluate the effectiveness of trained features. This helps assess whether the extracted feature representations are spatiotemporally smooth while preserving the identity of an event-pattern. The main goal of our results is to demonstrate the ability of the features to specialize towards the type of spatiotemporal activity occurring in the training dataset, while simultaneously showing decent generalization performance to other datasets. Through our experiments, we see the extracted spatiotemporal features learn invariances to translation, rotation and scaling based transformations.  

Lastly, we find that the optimized spatiotemporal features come out to have quite interesting properties, roughly analogous to visuo-cortical receptive fields. Some of them encode spatial edge orientation and others seemed to specialize towards motion direction of the edges, showing both spatial and spatiotemporal features are extracted.


\section{Why Spatiotemporal?}

Spatial features, for e.g. from SIFT, \cite{lowe_keypoint} are usually derived from the edge orientation distribution in a local region. This helps basic feature descriptors differentiate between corners, curves, and straight lines. Spatiotemporal features encode temporal motion dynamics in addition to edge geometry. Therefore, the spatiotemporal feature representations corresponding to two different motion directions of the same spatial pattern can be significantly different. A spatial feature descriptor is usually robust to illumination changes and distortion (e.g from change of pose) of the feature input. A spatiotemporal feature descriptor must additionally be invariant to slight motion distortions, i.e. changes in motion direction or magnitude. Due to such properties, we can underline scenarios where computing spatiotemporal feature descriptors can be useful and/or necessary.

\begin{itemize}
\item \textbf{Feature Point Tracking (and all relevant applications):} Random motion patterns forego any sense of temporal regularity, which limits the effectiveness of spatiotemporal features for this scenario. On the other hand, when motion can be assumed to be smooth, point trackers based on spatiotemporal features can provide better tracking capabilities as the spatiotemporal pattern surrounding the feature point shows little change over time. 
\item \textbf{Action Recognition:} Human actions constitute motion patterns which register a unique spatiotemporal footprint. This makes our method ideal for application in such a scenario.
\item \textbf{Deformable object tracking:} Spatial descriptors pertaining to an object will change with time as it deforms. On the other hand, spatiotemporal features invariant to such deformations will show greater robustness while tracking. 
\end{itemize}

In this paper, we test our spatiotemporal features on the problem of tracking feature points over time. To achieve this robustly, the features need to be invariant to the transformations to the feature point happening through time, and simultaneously selective enough, such that the tracker does not stray away from the intended feature point. Therefore, we expect that better spatiotemporal features will demonstrate better point tracking performance. Our spatiotemporal feature learning method proposed here does not explicitly consider the physical significance of the spike-events, and should extend just as well to a temporal stream of point-clouds (for e.g. in structure-from-motion estimation problems). The primary assumption in the feature learning method proposed here, is that more often than not a spatiotemporal feature will re-appear over time, most likely in the vicinity of its earlier appearance. This assumption mainly applies to event-based cameras, as almost continuous sampling of the scene ensures "continuity" in the trajectory of a visual feature.

\section{Methods} \label{sec:methods}

\subsection{Method Summary} \label{sec:m_summary}

\begin{figure}[h] 
    \centering
    \includegraphics[width=0.48\textwidth]{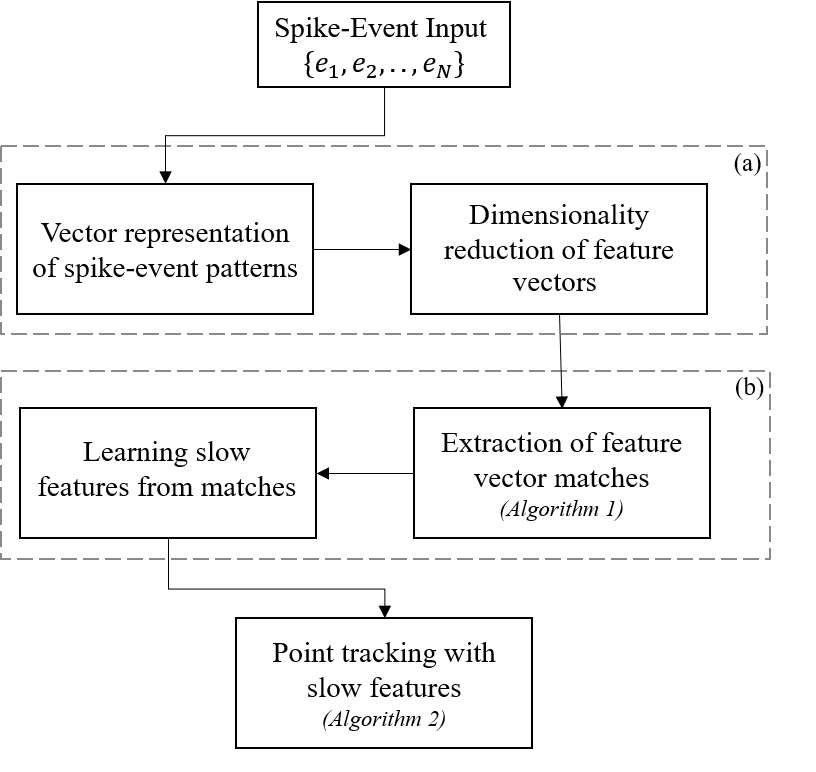}
    \caption{A flow chart summary of the methodology proposed in this paper, including the feature match extraction and point tracking algorithms. The approach can be divided into two parts; steps in (a) generate an initial PCA based feature representation which is later used in (b) to learn more robust features using SFA.}
    \label{fig:flow_chart}
\end{figure}
Fig. \ref{fig:flow_chart} presents a flow chart of the steps involved in our proposed methodology. The unsupervised feature learning method can be roughly divided into two parts, as demonstrated in the figure. We summarize each part as follows:
\begin{enumerate}
\item \textbf{Part (a)}: Here, an initial feature vector representation is generated for each event's box-neighbourhood. It simply consists of vectorizing the box neighbourhood while preserving the spatiotemporal topology. This culminates in pruning the feature vector to only account for the most informative components, generating a smaller, more dense feature representation. Therefore, such a sparse-to-dense conversion of the spike-event space is an essential part of the feature extraction and learning process, and will be used throughout.  
\item \textbf{Part (b)}: Here the feature representations are further enhanced by obtaining more stable linear projections using the track and learn approach. First, using the feature representations obtained in part (a), pairs of feature matches are generated throughout the data through tracking each feature point (Algorithm 1). Subsequently, slow feature analysis is performed to learn more robust projections which vary less within a tracked pair.  
\end{enumerate}

Once the final SFA based features are obtained, they are used in the point tracking algorithm (Algorithm 2). Next, we detail each step of our approach to learn and computing spatiotemporal features in the following sections.

\subsection{Box Neighborhood creation} \label{sec:background}

Spike-events generated from an event-based camera can be denoted by the set
\begin{eqnarray} \label{eq:stime}
E=\{ e_i \}_{i=1}^N \quad
\mbox{where} \quad e_i=(x_i,y_i,t_i,p_i).
\end{eqnarray}

Each spike-event is described by its spatial location $(x_i,y_i)$, time of occurrence $t_i$ and polarity $p_i$. In this work, we do not consider polarity information in the learning and computation of features. Polarity adds additional variation to the spatiotemporal patterns, and therefore will deter the robustness of the computed spatiotemporal features, due to this additional factor of variation. Without polarity, the spike-events can be simply visualized as points in a three-dimensional space of spike location $(x,y)$ and time $t$. We define a \textit{box neighborhood} around each spike-event $e_i$ as the set of neighboring spike-events which belong to a cuboidal region of size $(\Delta x \times \Delta y \times \Delta t)$ around $e_i$. We denote the box neighborhood around an event $e_i$ by the set $B_{E}(e_i,\Delta x, \Delta y, \Delta t)$.  An illustration of this step can be found in Fig. \ref{fig:dimreduce}(a).%

\subsection{Neighborhood Spike-Count Matrix}

\begin{figure*}[h]
    \centering
    \includegraphics[width=1\textwidth]{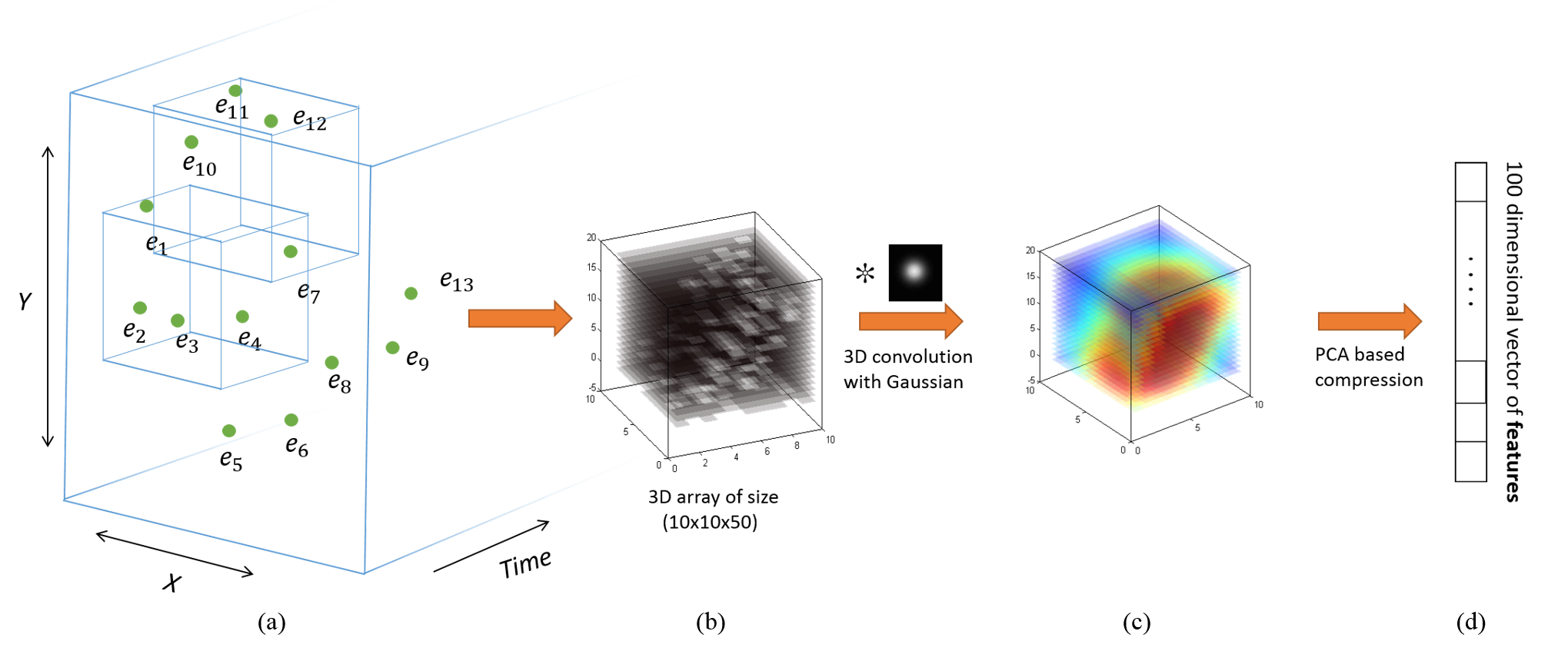}
    \caption{An illustration of the steps involved in feature compression. (a) demonstrates the creation of box neighbourhoods with spike-event data; (b) example of a three dimensional spike count matrix for parameter values $\Delta T=100$ ms, $M=50$, and $a=10$; (c) corresponding smoothed matrix with large smoothing parameters $\sigma_x=\sigma_y=\sigma_t=10$ to emphasize the coarse distribution of events within the box neighbourhood; (d) schematic showing the pruning of the $5000$ dimensional matrix to a much smaller, 100 dimensional feature vector using PCA (as decided by the 95\% variance preserving rule).}
    \label{fig:dimreduce}
\end{figure*}

Here a sparse 3D matrix is obtained, which shows a spatiotemporal voxel histogram distribution of spike-events within the box-neighborhood region around a spike-event $e_i$. The events in $B_{E}(e_i,\Delta x, \Delta y, \Delta t)$ are therefore used to create this matrix, which is representative of the distribution of spike-events around $e_i$. We set $\Delta x = \Delta y  = a$ and $\Delta t = T$. This volume is partitioned into a three-dimensional voxel grid, with each voxel of size $(1 \times 1 \times T/M)$. $M$ denotes the number of partitions along the time dimension. We empirically study the impact of $T$ and $M$ on feature quality in Sec. \ref{sec:results}. Expectedly, we find that too much precision (M$>$100) causes performance to drop considerably, as does too little (M$<$10).

With this, we define the neighborhood spike-count matrix $\mathbf{C}(e_i)$ as a 3D matrix of size $(a \times a \times M)$, such that each element of $\mathbf{C}(e_i)$ contains the number of spike-events inside a voxel within $B_{E}(e_i,a,a,T)$ (See Fig. \ref{fig:dimreduce}(b)). 
$\mathbf{C}(e_i)$ is flattened to form a one dimensional feature vector $\mathbf{c}(e_i)$ containing $d = a^2 M$ elements. Note that for our experiments $d$ can be typically large ($500-5000$), depending on the number of temporal partitions $M$. Choosing a higher value of $M$ preserves the spike time details, but can make $\mathbf{c}(e_i)$ very high dimensional, and vice-versa.

Since event timing information is essential for obtaining accurate feature representations, $M$ should be kept sufficiently high. Instead, we introduce a dimensionality reduction step for $\mathbf{c}(e_i)$ in the following section, which reduces feature dimension while preserving event time information to a greater degree.  

\begin{algorithm} 
 \caption{Sparse-to-dense conversion of the local spike-event pattern around $e_i$}
 \label{alg:sparse_to_dense}
\begin{algorithmic} 
 \REQUIRE Given event $e_i = (x_i,y_i,t_i)$, 3D Gaussian covariance matrix $\Sigma$, box-neighborhood size: $(a\times a \times T)$, temporal partition: $M$, Top-k PCA weight matrix $W_{PCA} = [w_1, w_2,...,w_k]$
 \ENSURE Dense feature representation $F_{PCA} \left (e_i, T\right )$
 \STATE 1) Consider the box-neighborhood of size $(a \times a \times T)$ around $e_i$. \\ 
 \STATE 2) Create the spike-event count matrix $\mathbf{C}(e_i)$ , by voxel-wise histogramming of the spike-events in the box-neighborhood region. Each voxel is of size $(1 \times 1 \times T/M)$. \\
 \STATE 3) Smooth $\mathbf{C}(e_i)$ with the 3D Gaussian $\Sigma$, to generate the smoothed and flattened vector $\mathbf{c}_{\Sigma}(e_i)$. \\ 
 \STATE 4) Generate the final $k$ dimensional dense representation $F_{PCA} \left (e_i, T\right ) = \left(W_{PCA} \right)^T \mathbf{c}_{\Sigma}(e_i)$, as the projections of the first $k$ principal components. \\
\end{algorithmic}
\end{algorithm}

\subsection{Sparse-to-Dense Framework}\label{sec:pca_reduce}
The spike-count matrix is first convolved with a 3D Gaussian kernel, $\mathcal{N}(0,\Sigma)$. $\Sigma$ is a diagonal covariance matrix, containing values of $\sigma_x$, $\sigma_y$ and $\sigma_t$ along the diagonal. This smoothing step ensures that neighboring voxel locations in $\mathbf{C}(e_i)$ show high correlation. A more involved discussion on the necessity of this step is taken up in Section \ref{sec:discussion}. After convolution with the Gaussian kernel, the smoothed spike-count matrix is denoted as $\mathbf{C}_{\Sigma}(e_i)$.

$\mathbf{C}_{\Sigma}(e_i)$ is flattened to obtain the vector $\mathbf{c}_{\Sigma}(e_i)$.  Fig.~\ref{fig:dimreduce}(c) illustrates the smoothing step with a real example of spike-event data. Observe that over-smoothing could lead to loss of temporal and spatial information due to over-smoothing. In contrast small variance parameters retain temporal and spatial information, but generate a sparse vector. For all our experiments we keep the smoothing parameters unchanged at $\sigma_x=\sigma_y=\sigma_t=3$ \footnote{ Note that the unit of $\sigma_t$ is w.r.t the time dimension of each voxel, which in this case is $T/M$.} This choice of standard deviation was set empirically. 

The smoothed spike-count vector $\mathbf{c}_{\Sigma}(e_i)$ is obtained for all events $e_i$. To reduce dimensionality of  $\mathbf{c}_{\Sigma}(e_i)$, a principal components analysis is performed to choose only $k$ ($\leq d$) projections which preserve 95\% of the variance in the spike-count vector . As an example shown in Fig. \ref{fig:dimreduce}(d), 100 components are chosen for the feature representation based on this metric. The $k$ pca weights $\mathbf{W}_{PCA} = [w_1, w_2,...,w_k]$ are vectors of dimensionality $a^2M$ and therefore can be matricized\footnote{Matricization is referred to converting a $(1\times n^2)$ vector into its original 2D matrix form. Here we extend the definition to 3D matrices. Since each dimension of $w_i$ corresponds to a certain voxel within the box-neighborhood, the matricization of $w_i$ simply maps each element to its corresponding voxel, thereby creating a 3D matrix. Note that vectorization is the opposite.} to a 3D matrix of the same size as the spike-count matrix. If we let these matricized forms of $(w_1, w_2,...,w_k)$ be denoted as $(\mathbf{W}_1, \mathbf{W}_2,...,\mathbf{W}_k)$, then the product $w_j^{T}\mathbf{c}_{\Sigma}(e_i)$ can be rewritten as

\begin{equation} \label{eq:pca_feats}
SUM \left ( \mathbf{W}_j \circ \left (\mathbf{C}(e_i) \ast \mathcal{N}(0,\Sigma) \right ) \right ) .
\end{equation}

The $SUM(\mathbf{X})$ operator returns a scalar which is the sum of all elements of the matrix $\mathbf{X}$. The $\circ$ symbol is simply element-wise multiplication. The sequence of operations thus described will return a $k$ dimensional, linear compression of each spike-count matrix. It can be characterized as a PCA based feature representation. In practice, the computations in equation \ref{eq:pca_feats} must be repeated for each spike-count matrix $\mathbf{C}(e_i)$, including the convolution process with the Gaussian kernel. To avoid such repetitive convolution, we use the following Theorem. 

\begin{theorem}\label{thm:circconv}
Given 3D matrices $\mathbf{A}$, $\mathbf{B}$ and $\mathbf{C}$ where $\mathbf{C}$ is a 3D Gaussian kernel, $SUM \left(\mathbf{A} \circ \left (\mathbf{B} \ast \mathbf{C} \right) \right)= SUM \left ( \left(\mathbf{A} \ast \mathbf{C} \right) \circ \mathbf{B} \right)$ (Proof in Appendix). 
\end{theorem}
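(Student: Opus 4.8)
The plan is to recognize the quantity $SUM(\mathbf{A}\circ(\mathbf{B}\ast\mathbf{C}))$ as a triple inner product and then move the convolution across by a change of summation variable, using the symmetry of the Gaussian kernel. First I would write everything out with explicit indices. Since $\mathbf{B}\ast\mathbf{C}$ is a 3D convolution, its entry at position $\mathbf{n}$ (a multi-index over the voxel grid) is $\sum_{\mathbf{m}} \mathbf{B}[\mathbf{m}]\,\mathbf{C}[\mathbf{n}-\mathbf{m}]$. Therefore
\begin{equation}\label{eq:proof_expand}
SUM\!\left(\mathbf{A}\circ(\mathbf{B}\ast\mathbf{C})\right) \;=\; \sum_{\mathbf{n}} \mathbf{A}[\mathbf{n}] \sum_{\mathbf{m}} \mathbf{B}[\mathbf{m}]\,\mathbf{C}[\mathbf{n}-\mathbf{m}] \;=\; \sum_{\mathbf{m},\mathbf{n}} \mathbf{A}[\mathbf{n}]\,\mathbf{B}[\mathbf{m}]\,\mathbf{C}[\mathbf{n}-\mathbf{m}].
\end{equation}
The right-hand side is manifestly a symmetric-looking bilinear form in $\mathbf{A}$ and $\mathbf{B}$ coupled through $\mathbf{C}$ evaluated at the difference of indices.

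Next I would do the analogous expansion for $SUM\!\left((\mathbf{A}\ast\mathbf{C})\circ\mathbf{B}\right)$, obtaining $\sum_{\mathbf{m},\mathbf{n}} \mathbf{B}[\mathbf{n}]\,\mathbf{A}[\mathbf{m}]\,\mathbf{C}[\mathbf{n}-\mathbf{m}]$. Relabel the dummy indices $\mathbf{m}\leftrightarrow\mathbf{n}$ to rewrite this as $\sum_{\mathbf{m},\mathbf{n}} \mathbf{A}[\mathbf{n}]\,\mathbf{B}[\mathbf{m}]\,\mathbf{C}[\mathbf{m}-\mathbf{n}]$. Comparing with \eqref{eq:proof_expand}, the two expressions agree precisely when $\mathbf{C}[\mathbf{m}-\mathbf{n}] = \mathbf{C}[\mathbf{n}-\mathbf{m}]$ for all index pairs, i.e. when $\mathbf{C}$ is an even (centrally symmetric) kernel. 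A zero-mean Gaussian with diagonal covariance, sampled symmetrically about the origin, satisfies $\mathbf{C}[-\mathbf{k}]=\mathbf{C}[\mathbf{k}]$, so the identity follows. I would remark that the only property of $\mathbf{C}$ actually used is this central symmetry, not Gaussianity per se, and that boundary/truncation conventions for the finite kernel must be consistent (e.g. treating out-of-range indices as zero) for the index substitutions to be valid.

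The only mild obstacle is bookkeeping at the boundaries: the matrices are finite, so one must fix a convention (full convolution with zero-padding, or equivalently extend all arrays by zero outside their support) so that the unrestricted sums over $\mathbf{m},\mathbf{n}$ in \eqref{eq:proof_expand} are well-defined and the reindexing $\mathbf{m}\leftrightarrow\mathbf{n}$ does not run into edge effects. With that convention fixed, the argument is a one-line change of variables. I do not expect any genuine difficulty here; the key insight is simply that $SUM(\mathbf{A}\circ(\mathbf{B}\ast\mathbf{C}))$ is symmetric under swapping $\mathbf{A}$ and $\mathbf{B}$ whenever $\mathbf{C}$ is symmetric, and swapping them is exactly what moving the convolution from $\mathbf{B}$ to $\mathbf{A}$ accomplishes.
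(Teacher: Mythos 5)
Your proof is correct, and it reaches the result by a somewhat different route than the paper while resting on the same key fact. The property you isolate---that the identity reduces to central symmetry $\mathbf{C}[\mathbf{k}]=\mathbf{C}[-\mathbf{k}]$ of the zero-mean Gaussian kernel---is exactly what the paper's proof uses, but you get there by expanding $SUM\left(\mathbf{A}\circ(\mathbf{B}\ast\mathbf{C})\right)$ into the double sum $\sum_{\mathbf{m},\mathbf{n}}\mathbf{A}[\mathbf{n}]\,\mathbf{B}[\mathbf{m}]\,\mathbf{C}[\mathbf{n}-\mathbf{m}]$ and swapping dummy indices, whereas the paper argues structurally: it zero-pads the arrays, interprets $sum(\mathbf{Y}_1\circ\mathbf{Y}_2)$ as the central element of the convolution $(\sim\mathbf{Y}_1)\ast\mathbf{Y}_2$ (with $\sim$ denoting the mirror-image matrix), invokes associativity of convolution to move $\mathbf{C}$ from $\mathbf{B}$ onto $\mathbf{A}$, and then uses $\sim(\mathbf{A}\ast\mathbf{C})=(\sim\mathbf{A})\ast(\sim\mathbf{C})$ together with $\sim\mathbf{C}=\mathbf{C}$. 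Your index computation is more elementary---it needs no auxiliary lemmas about mirrored convolutions or central elements---and it makes explicit both the zero-padding convention needed for the reindexing and the precise hypothesis actually used; the paper's version packages the same bookkeeping into reusable convolution identities, at the cost of a slightly heavier setup. Both arguments deliver the same generalization, which the paper also records at the end of its appendix: the identity holds for any kernel equal to its own mirror image, with Gaussianity playing no role beyond evenness.
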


The above result allows us to re-order the convolution in equation \ref{eq:pca_feats}. The $j^{th}$ component of the PCA features can be thus rewritten as  

\begin{equation} \label{eq:pca_feats_swap}
SUM \left ( \left ( \mathbf{W}_j  \ast \mathcal{N}(0,\Sigma) \right) \circ \mathbf{C}(e_i) \right ),
\end{equation}

which is the inner product between the vectorized spike-count matrix $\mathbf{c}(e_i)$ and the vectorized form of the smoothed weight-matrix $\mathbf{W}_j  \ast \mathcal{N}(0,\Sigma)$, denoted as $w^{\Sigma}_j$. Thenceforth, for simplicity of notation, the PCA based representations on the box-neighborhood region of event $e_i$ is summarized as

\begin{equation} \label{eq:pca_features}
\mathbf{pc}_i = F_{PCA} \left (e_i, T\right ).
\end{equation} 

As mentioned before, $T$ is the size of the box-neighborhood cuboid along the time dimension. In this work we keep the size of the spatial ROI of the box-neighborhood (parameter $a$), unchanged for all experiments ($a=10$). An overview of the entire process thus described is shown in Algorithm \ref{alg:sparse_to_dense}.

\begin{algorithm} 
 \caption{Point match extraction (for $e_i$)}
 \label{alg:point_matching}
\begin{algorithmic} 
 \REQUIRE All spike-events $E=\{e_1,e_2,...,e_N\}$ , \   Event to be matched $e_i=(x_i,y_i,t_i)$, \   $ T_{\delta}=[\delta t_1,\delta t_2,..,\delta t_k] $ \\ Overlap parameter $r$ 
 \ENSURE Event displacement parameters $\overrightarrow{\delta_i} = (\delta x_i,\delta y_i,\delta t_i)$, Matched pair of spike-count vectors $(\mathbf{pc}_i,\mathbf{pc'}_i)$ 
 \FOR {$\overrightarrow{\delta_e}  \in \{-1,1\} \times  \{-1,1\} \times T_{\delta} $}
 \STATE $\mathbf{D}(\overrightarrow{\delta_e} ) \leftarrow \ \left \lVert F_{PCA} \left ( e_i,  \frac{\delta t}{1+r} \right ) - F_{PCA} \left ( e_i +\overrightarrow{\delta_e} ,  \frac{\delta t}{1+r} \right )  \right \rVert ^2  $
\ENDFOR \\
$\overrightarrow{\delta_i} \leftarrow _{argmin \  \overrightarrow{\delta_e} } \mathbf{D}(\overrightarrow{\delta_e} )$; \\
 $(\mathbf{pc}_i,\mathbf{pc'}_i) \leftarrow \left ( F_{PCA} \left ( e_i,  \frac{\delta t}{1+r} \right ) , F_{PCA} \left ( e_i + \overrightarrow{\delta_i},  \frac{\delta t}{1+r}  \right ) \right ) $ 
;
\end{algorithmic}
\end{algorithm}

\subsection{Extraction of Feature Vector Matches} \label{sec:feature_matching}
\begin{figure}[htbp]%
    \centering
\subfigure[Search space for a fixed temporal displacement, $\Delta T$.]{\includegraphics[width=0.24\textwidth]{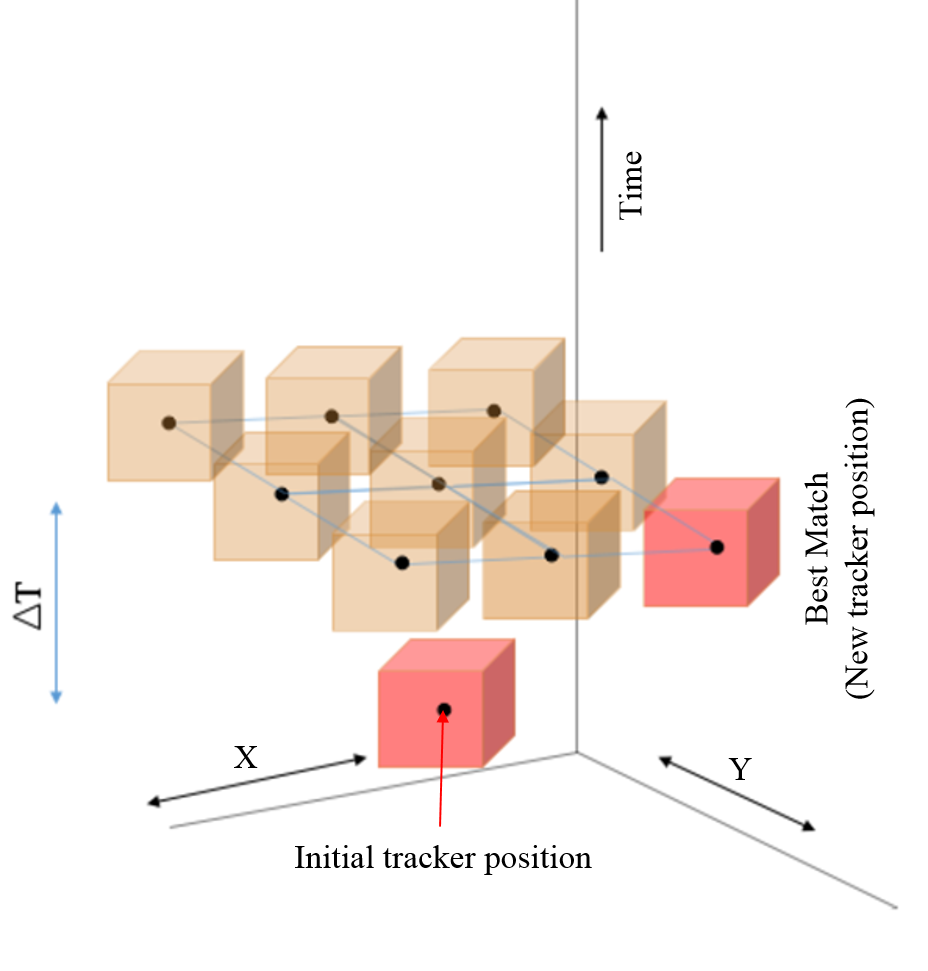}\label{fig:matching_space-a}}
\subfigure[Our proposed approach with variable temporal displacements.]{\includegraphics[width=0.24\textwidth]{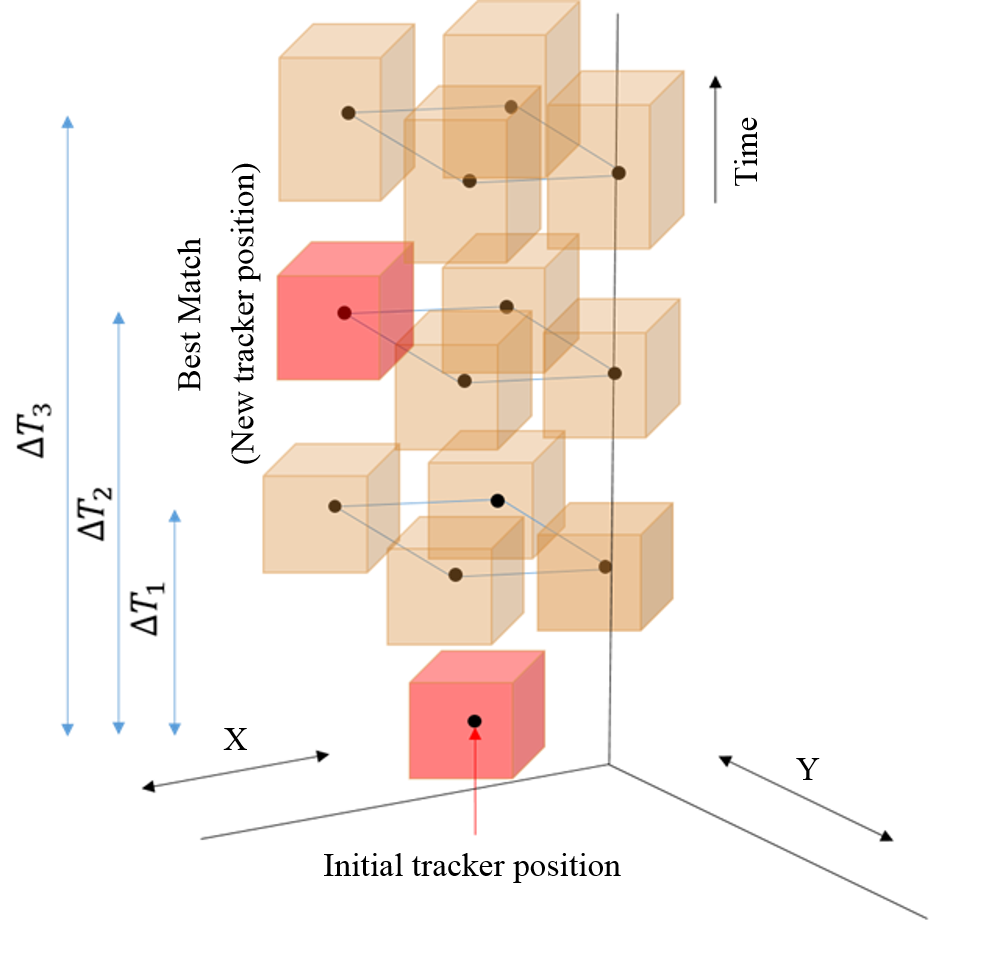}\label{fig:matching_space-b}}
\caption{Contrast between two approaches to updating tracker position is shown. In (a), lower speeds of the feature point would lead to smaller spatial displacements and vice versa. Whereas in (b), a fixed spatial displacement implies that temporal displacement controls the speed of the tracker. Note the increase in temporal dimension of box neighbourhoods as $\Delta T$ increases, corresponding to time-scaling of the pattern in response to variable speeds. For simplicity overlapping box neighborhoods are not shown.}%
\label{fig:matching_space}%
\end{figure}
For an event $e_i=(x_i,y_i,t_i)$, we seek its event displacement parameters $(\delta x_i, \delta y_i, \delta t_i)$. These provide the location of the event  $(x_i +\delta x_i, y_i + \delta y_i)$, at a later time $t_i+\delta t_i \ (\delta t_i>0)$. To estimate them, a dissimilarity measure between a pair of feature vectors extracted at different $(x,y,t)$ locations is needed. Given arbitrary displacement parameters $\overrightarrow{\delta_e} = (\delta x,\delta y,\delta t)$, we choose the Euclidean distance between the PCA feature vectors
\begin{equation}
\Big\|
 F_{PCA} \left ( e_i, T\right )  -  F_{PCA} \left ( e_i + \overrightarrow{\delta_e} , T\right )
\Big\|^2 , 
\end{equation}
as the measure of dissimilarity between the event-distribution around $(x_i,y_i,t_i)$ and $(x_i+\delta x, y_i+\delta y,t_i+\delta t)$.  

The search space of probable $(\delta x,\delta y,\delta t)$ must permit variable speeds of the feature. One approach to finding these parameters involves doing a local grid search over possible values of $(\delta x, \delta y)$ as shown in Fig. \ref{fig:matching_space} (a), with $\delta t$ kept fixed. Instead, here we assign $|\delta x|=1$ and $|\delta y|=1$ and vary $\delta t$ to account for different speeds. Let 
\begin{equation*} \label{eq:time_dims}
T_{\delta}=\{\delta t_1,\delta t_2,..,\delta t_k\}
\end{equation*}
be the set of possible temporal displacements. In other words, these are the possible times taken for a visual feature to have moved across the spatial domain by 1 pixel in either the $x$ or $y$ direction (or both). The set of possible event displacements is represented by the cartesian product
\begin{equation}
\Delta_E=\{-1,1\} \times  \{-1,1\} \times T_{\delta}.
\end{equation}
The optimum displacement parameters $(\delta x_i,\delta y_i,\delta t_i)$ are the $(\delta x,\delta y,\delta t) \in \Delta_E$ for which the Euclidean distance
\begin{equation}
\Big\|
 F_{PCA} \left ( e_i,  \frac{T}{1+r} \right )  -  F_{PCA} \left ( e_i + \overrightarrow{\delta_e} , \frac{T}{1+r}\right )
\Big\|^2, 
\end{equation}
is minimum. Note that the temporal dimension of the box neighborhood changes proportionally to the temporal displacement $\delta t$ as $\Delta T=\frac{\delta t}{1+r}$. This proportional \textit{stretching} of the box-neighborhood along the time dimension can be explained by speed considerations of the feature (more details in section \ref{sec:discussion}). The parameter $r \in (0,1)$ controls the overlap between the two box neighbourhoods. 
 
The search space of possible event displacement parameters is shown in Fig.~\ref{fig:matching_space} (b). For the sake of simplicity, non-overlapping box neighborhoods are shown. Formally the point matching algorithm is summarized in Algorithm ~\ref{alg:point_matching}. The algorithm outputs a matching pair of feature vectors $(\mathbf{pc}_i,\mathbf{pc'}_i)$, where $\mathbf{pc'}_i$ is the PCA based feature representation corresponding to the location of the best match for the $i^{th}$ event. To extract all matching pairs across the training data, Algorithm ~\ref{alg:point_matching} is repeated over all events. The quality of these feature matches is clearly dependent on the PCA features. 
 
The matching pairs of features $\mathbf{pc}_i$ and $\mathbf{pc'}_i$ are essentially the vectorized form of the spike-count matrices extracted at locations $(x_i,y_i,t_i)$ and $(x_i + \delta x_i , y_i + \delta y_i, t_i + \delta t_i)$. These pairs of vectors extracted throughout available event-data are used to further enhance the feature representations. This is described in the following section.

\begin{figure*}[htbp]%
    \centering
\includegraphics[width=0.8\textwidth]{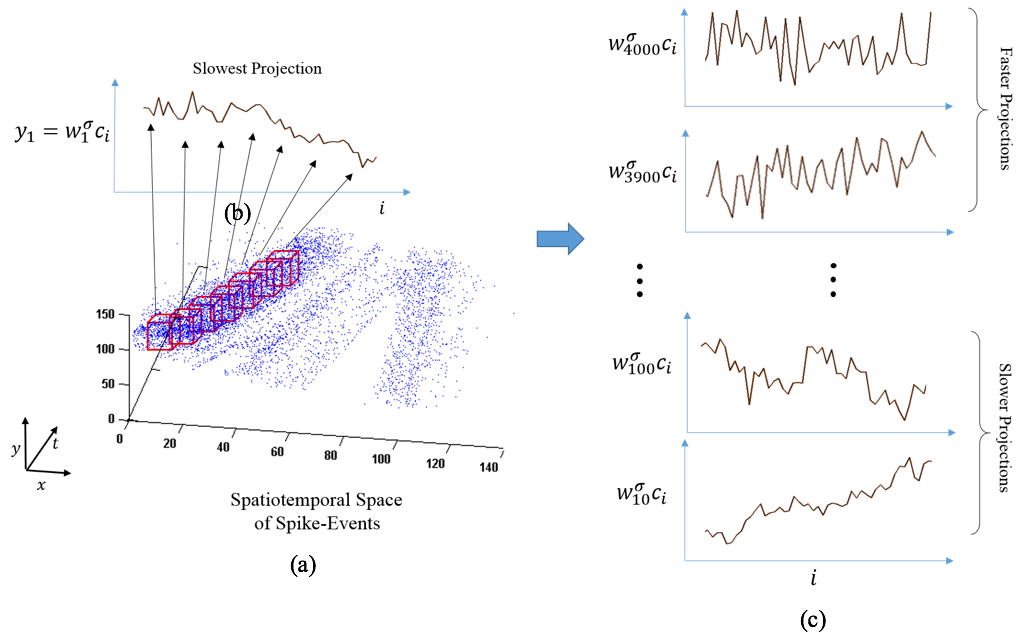}
\caption{An example of the slow feature extraction process on spike-event data (blue dots) obtained from the traffic dataset. (a) the boxes highlight the spatio-temporal path of the tracker, which in this case was made to follow a corner point. $c_i$ represents the vectorized spike-count matrix obtained from the spatiotemporal bounding boxes (in red) of size $10\times10\times200$ms, (b) the sequence of values of the projection of $w_1^\sigma$ is shown on the successive boxes formed by the tracker trajectory, numbered by the variable $i$, (c) shows contrast between sequences formed by weight vectors having different values of slowness. Note that the weight vectors $w_k^\sigma$, $k=1$ to $4000$, are ordered with respect to decreasing values of the slowness parameter. }%
\label{fig:sfa_progression_new}%
\end{figure*}
\subsection{Learning slow features from matches}


Once all the matched pairs of spike-count vectors $(\mathbf{pc}_i,\mathbf{pc'}_i)_{i=1}^{K}$ have been extracted from the training data, we wish to estimate linear projections which use this correspondence information to create robust features. The goal here is to learn features which show small variation within a match, but large variation across different matches. Note that each matching pair of spike-count vectors is indicative how the spatiotemporal activity changed when a feature re-appeared. Therefore by minimizing variation of the feature within a match, we can make the feature representations robust to the visual transformations that occur within the short time between the two sightings of the feature. The process of learning these features are detailed as follows. 

The 3D matrix forms of $\mathbf{pc}_i$ and $\mathbf{pc'}_i$ are smoothed with the Gaussian kernel $\mathcal{N}(0,\Sigma)$, to generate modified feature matches $(\mathbf{pc}^{\Sigma}_i,\mathbf{pc'}^{\Sigma}_i)_{i=1}^{K}$. This is the same convolution step used before in section \ref{sec:pca_reduce}. We wish to find linear features, computable through weight vectors $\mathbf{W}_{SFA}$, which adhere to the previously mentioned constraints. To achieve that, we use a method already present in literature, called slow feature analysis (SFA). First, we highlight a brief summary of the methods used in SFA. 

Given a temporal sequence of $d$ dimensional vectors 
\begin{equation*}
\left(\mathbf{X}_1,\mathbf{X}_2,...,\mathbf{X}_i,..,\mathbf{X}_l\right),
\end{equation*}
SFA defines a \textit{slowness} parameter
\begin{equation} \label{eq:slowness_weights}
S(\mathbf{w})=\frac{\mathbb{E}_i\left[\mathbf{w}^T\mathbf{X}_{i+1} - \mathbf{w}^{\mathsmaller T}\mathbf{X}_i \right]^2}{\mathbb{E}_i\left[\mathbf{w}^{\mathsmaller T}\mathbf{X}_i-\mathbb{E}\left[\mathbf{w}^{\mathsmaller T}\mathbf{X} \right ] \right]^2}
\end{equation}
for a linear projection vector $\mathbf{w}$. Here $\mathbb{E}_i$ is the expectation (mean) operator over the index $i$. Similar to PCA, the SFA method returns an orthogonal set of $d$ linear projections $\left[ \mathbf{w}_1 \ \mathbf{w}_2 \ ... \ \mathbf{w}_d \right]$ such that their slowness values $\left( S(\mathbf{w}_1),S(\mathbf{w}_2),..,S(\mathbf{w}_d)\right)$ are in ascending order, with $\mathbf{w}_1$ having the smallest value among all possible projections. 

For our case, we find weights which minimize 
\begin{equation} \label{eq:feature_SFA}
S(\mathbf{w})=\frac{\mathbb{E}_i\left[ \left ( \mathbf{w}^{\mathsmaller T}\mathbf{pc'}^{{\Sigma}}_i - \mathbf{w}^{\mathsmaller T}\mathbf{pc}^{{\Sigma}}_i \right )^2 \right ]}{\mathbb{E}_i \left [\mathbf{w}^{\mathsmaller T}\mathbf{pc}^{{\Sigma}}_i-\mathbb{E}_{k}\left[\mathbf{w}^{\mathsmaller T}\mathbf{pc}^{{\Sigma}}_k \right ] \right ]^2}  .
\end{equation}

Observe the similarities between the above formulation and SFA. The only difference is that instead of a sequence of observations, we have pairs of them. The numerator in the equation \ref{eq:feature_SFA} is the difference in the feature value within a matching pair. The denominator is the variance of the feature value across all feature matches. We want projections which have small intra-match differences and large inter-match differences. This would mean smaller $S(\mathbf{w})$ is preferred over larger. Thus we choose only $n_{\mathsmaller{SFA}}<<d$ projections having the lowest values of $S(\mathbf{w})$. The $n_{\mathsmaller{SFA}}$ linear projections are portrayed as the columns of the matrix 
\begin{equation*}
\mathbf{W}_{\mathsmaller{SFA}} =\left[ \mathbf{w}_1 \ \mathbf{w}_2 \ ... \ \mathbf{w}_{n_{\mathsmaller{SFA}}} \right],
\end{equation*}
where $\mathbf{W}_{\mathsmaller{SFA}} \in \mathbb{R}^{(d \times n_{\mathsmaller{SFA}})}$. Applying theorem \ref{thm:circconv} as before, we generate the Gaussian smoothed equivalent of those weights as follows. 
\begin{equation}
\mathbf{W}_{\mathsmaller{SFA}}^{\mathsmaller T}\mathbf{pc}^{\Sigma}_i=(\mathbf{W}_{\mathsmaller{SFA}}^{\Sigma})^{\mathsmaller T}\mathbf{pc}_i.
\end{equation}
 The above result allows for a one-time convolution operation on the 3D matricized versions of each weight vector $\mathbf{w}_i$ to generate the permanently modified weights $\mathbf{W}^{\Sigma}_{\mathsmaller{SFA}}$, instead of performing smoothing on the spike-count matrix each time. Fig.~\ref{fig:sfa_progression_new} shows an example of the projections obtained from the trained weights changing with time. We reiterate that the only purpose of finding a PCA based feature representation is to obtain matches across the event-data which are subsequently utilized to learn more robust projections in $\mathbf{W}^{\Sigma}_{\mathsmaller{SFA}}$. For each event $e_i$, we denote the final SFA based feature extraction process by the function
 \begin{equation}\label{eq:final_features}
 F_{SFA} \left (e_i, T,n_{SFA}\right).
 \end{equation} 
The above function formulation is similar to the earlier PCA feature estimation function in equation \ref{eq:pca_features}, with an added parameter $n_{SFA}$ which controls the number of projections, and therefore the dimensionality of the feature itself. For simplicity, we omit this argument from any subsequent mentions of this function. 
 

\subsection{Evaluation: Point tracking with slow features}
%

 \textit{Trackers} are initialized at feature points throughout the event-data. Each tracker updates its position in time according to the motion of the feature point, and maintains a feature vector representation at all times. The objective is to accurately track feature points as they gradually undergo smooth transformations. Our intention here is to verify the robustness of the estimated features, and therefore we make the problem as hard as possible, by only allowing single-pixel updates on tracker position each time. The tracking algorithm follows the same framework described in section \ref{sec:feature_matching}, where feature matches were found by estimating the event displacement parameters. The only difference here is the use of $F_{SFA} \left (e_i, T\right)$  instead of the PCA features $F_{PCA} \left (e_i, T\right)$ for tracking. 

The method is elaborated in Algorithm~\ref{alg:point_tracking}. As illustrated in an example presented in Fig. \ref{fig:sfa_progression_new} (a), the tracker simply updates its spatial and temporal position based on the best matches obtained at each iteration with a reference feature vector. The reference feature vector is extracted at the initial location of the tracker  $(x_0,y_0,t_0)$. 

It undergoes spatial displacement only when the number of spike-events in the box neighbourhood of the best matching box is greater than a threshold number. This we refer to as the \textit{stopping criterion}. The stopping criterion is essential for a tracker to stay put when the visual feature does not move, and therefore does not produce enough spike-events. A similar step was proposed in \cite{multikernel_tracking}, where the \textit{activity} of each tracker was used to control whether the tracker updates its position, or stays put. 
 
\begin{algorithm} 
 \caption{Feature point tracking}
  \label{alg:point_tracking}
\begin{algorithmic}
 \REQUIRE{All Events $E=\{e_1,e_2,...,e_N\}$ \\ Initial tracker position: $\mathbf{p}_0$ \\ $ T_{\delta}=\{\delta t_1,\delta t_2,..,\delta t_k\} $ \\ Overlap parameter: $r$ \\ Spatial dimension of neighborhood: $a$ \\ Minimum event threshold for tracker displacement: $N_0$}
 \ENSURE{Tracker positions over $k$ iterations $\{ \mathbf{p}_0, \mathbf{p}_1, .. , \mathbf{p}_{k-1}) \}$}
\FOR {$i \in \{0,1,..,k-1\}$}
 \FOR {$\boldsymbol{\delta} \in \{-1,1\} \times  \{-1,1\} \times T_{\delta}$} 
 \STATE $\mathbf{D}(\boldsymbol{\delta}) \leftarrow \ \left \lVert  F_{SFA} \left (\mathbf{p}_i, \frac{T}{1+r}  \right) -  F_{SFA} \left (\mathbf{p}_i + \boldsymbol{\delta}, \frac{\delta t}{1+r}\right) \right \rVert^2  $
\STATE  $Count(\boldsymbol{\delta}) \leftarrow \left| \mathcal{B}_E(\mathbf{p}_i + \boldsymbol{\delta}, a, a, \frac{T}{1+r}) \right|$
\ENDFOR
\ENDFOR
\STATE $\boldsymbol{\delta}_{min} \leftarrow _{argmin \  \boldsymbol{\delta}} \mathbf{D}(\boldsymbol{\delta})$   
\STATE $minCount = \underset{\boldsymbol{\delta}}{min} \{ Count(\boldsymbol{\delta}) \}$ 
\IF{$minCount > N_0$}
\STATE $\mathbf{p}_{i+1} \leftarrow \mathbf{p}_i + \boldsymbol{\delta}_{min}$ \\
\ELSE
\STATE $\boldsymbol{\delta}_{min} = (0,0,\delta t_{min})$ \\ 
\STATE $\mathbf{p}_{i+1} \leftarrow \mathbf{p}_i + \boldsymbol{\delta}_{min}$ \\
\ENDIF
\end{algorithmic}
\end{algorithm}

\begin{figure}[h]
\hspace*{-1.2cm} 
\includegraphics[width=0.6\textwidth]{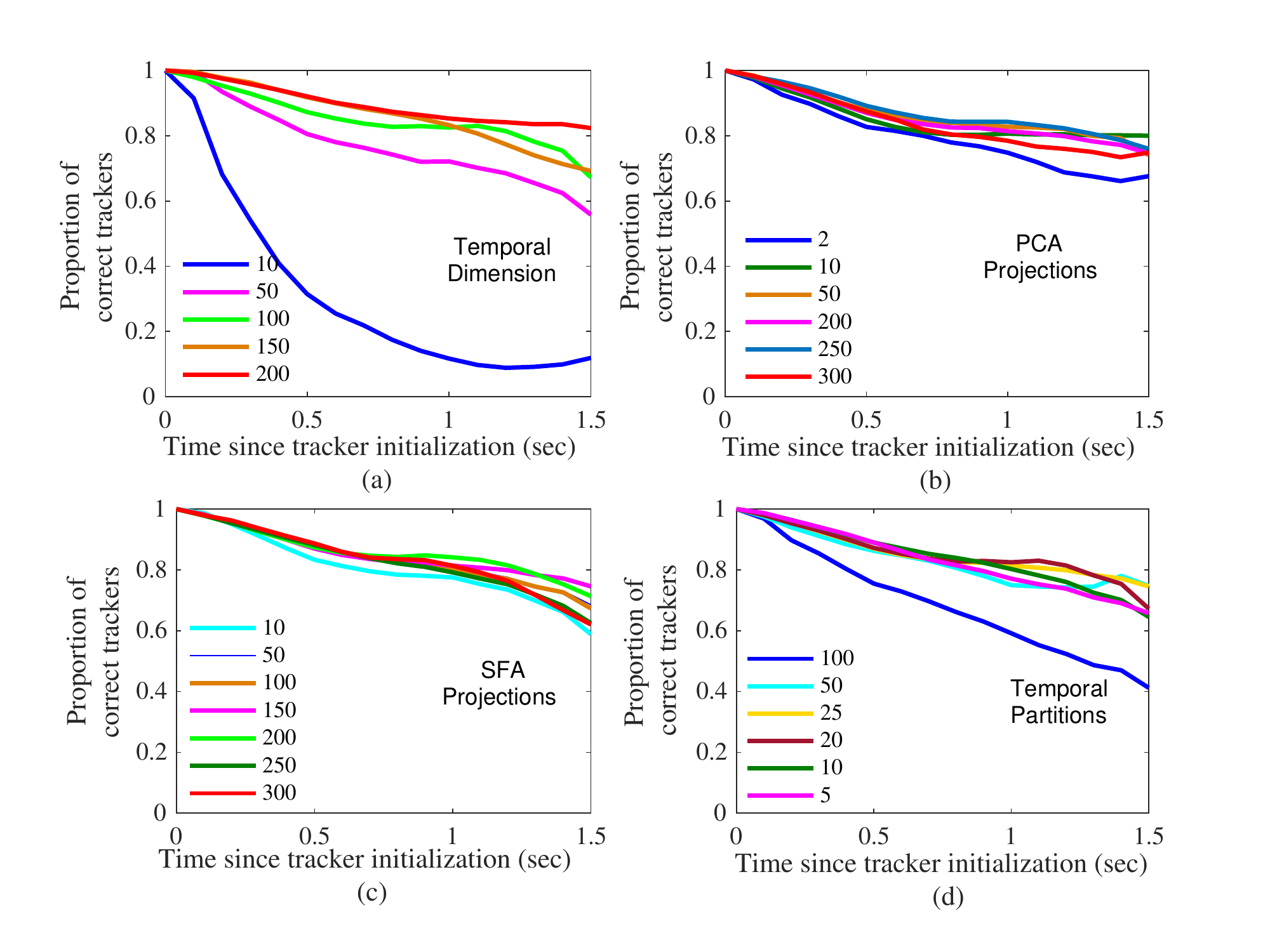}
\caption{Shown are the accuracy plots generated from testing the point tracking algorithm on the Traffic dataset. Figures (a)-(d) show all accuracy plots containing the analyses of tracker performance in the traffic dataset. In short, (a) (\textit{temporal dimension}) highlights that acquiring more information about the event pattern around a feature point, by increasing temporal ROI size, leads to better performance; (b)  (\textit{Number of PCA projections $n_{PCA}$}) indicates that using too few or too many principal components for the initial tracker matches affects performance negatively; (c)  (\textit{Number of SFA projections $n_{SFA}$}) shows that using too few or too many SFA projections affects performance negatively, with the optimum performance at around $n_{SFA}=150$; (d)  (\textit{Number of temporal partitions $M$}) indicates that preserving more time information of each event leads to better performance, but up to a point, after which performance degrades noticeably.}
\label{fig:all_accs_traffic}
\end{figure}

\begin{figure*}[h]%
    \centering
\includegraphics[width=0.8\textwidth]{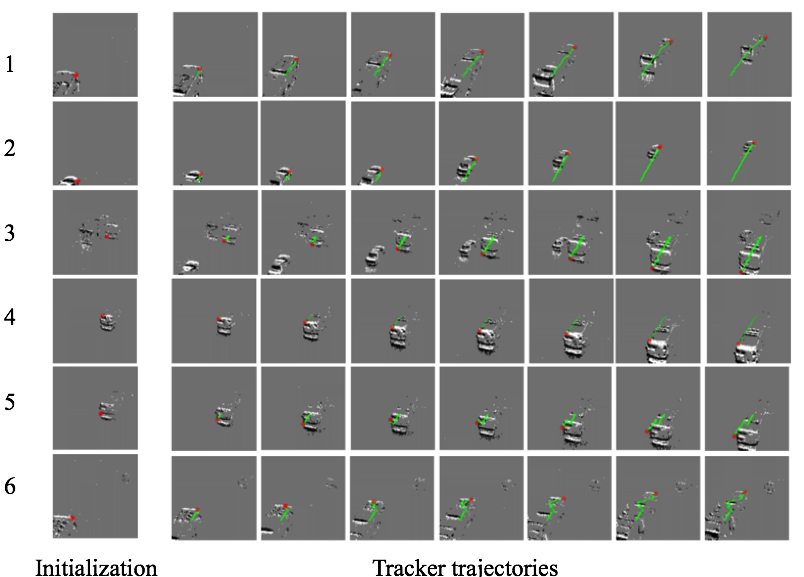}
\caption{Examples of tracker trajectories computed by our algorithm ($M=40$, $\Delta T=200 ms$, $a=10$) on the testing data. On a given row, the left most frame shows the initial location of the tracker (in red). For the images to the right, the past trajectory of the tracker is overlayed onto each image in green. }%
\label{fig:abcd}%
\end{figure*}


  
\section{Experiments and Results} \label{sec:results}
\subsection{Pre-processing}
A noise filtering routine \cite{noise_dvs} was applied to the obtained spike-events from each recording. The filtering process eliminates spike events which only have a single event in the box neighbourhood $\mathcal{BN}_E(e_i, 2,2, 3\times 10^4 \mu s)$. 

\subsection{Tracker performance analysis}

To test tracker performance, multiple ground truth trajectories of feature points were annotated across event-data. The feature points which were chosen for annotation were mainly corner-like points. We denote each $i^{th}$ ground truth trajectory by the set $\left \{x_g^i(t),y_g^i(t),t \ \middle|  0<t<T_{max}^i \right \}$, where $t$ is the time since each tracker was initialized. Similarly, the $i^{th}$ estimated trajectory from our method is represented by the set $\left \{x_e^i(t),y_e^i(t),t \ \middle| \   0<t<T_{max}^i \right \}$. The tracking performance is quantified using the following metric.
\begin{equation}\label{eq:eval_tracker}
 \tau(t)=\frac{ 
 \underset{i}{\#} \left \{ \sqrt{(x_{\small{g}}^i(t)-x_{\small{e}}^i(t))^2 + (y_{\small{g}}^i(t)-y_{\small{e}}^i(t))^2} \leq 7 \right \} 
}{\underset{i}{\#} \left\{T_{max}^i>t\right\}}  
\end{equation}

Here ${\#}$ is the set cardinality operator. Note that $\tau(t)$ represents the proportion of trackers which are within a distance of seven pixels from their corresponding ground truth locations at time $t$ since their initialization. The plot of $\tau(t)$ over a fixed time interval of $0$ (initialization) to $1.5$ seconds is used as a quantifier of tracking performance for all our experiments. We refer to this curve as the \textit{accuracy plot}. Most of the analyses presented in the following sections involves monitoring accuracy plots and their response to changes in algorithm parameters.

\subsection{Traffic (translation and scaling)}
The DVS was placed on the handrail of a pedestrian overpass facing the road below. The event data generated from vehicular motion in the direction towards and away from the DVS was recorded. The recording was done over a period of 15 minutes in broad daylight. The level of traffic ranged from one to five vehicles at any time instant, with a few instances showing occlusion. The curved road resulted in non-linear tracker trajectories. Fig.\ref{fig:traffic_data} shows example data from the recording. The initial five minutes of the data was used for training and the rest (10 minutes) for testing. The annotated data consists of 130 tracker trajectories evenly spaced throughout the data\footnote{The data along with the annotations are available online at \textit{https://files.fm/u/933uurrr?k=90a4c919}.}. Fig.~\ref{fig:abcd} displays selected tracker trajectories estimated by our algorithm on the test annotations. In the cases numbered from 1 to 4, the tracker trajectory is accurate throughout the recording. However, for cases 5 and 6, note that the tracker strays away from the feature point, but is eventually able to recover and regain accurate positioning. We observed such cases where the tracker deviates from the correct feature position due to less spike-events recorded. However, in most cases, subsequent recovery occurred, indicating the robustness of our method. Note the extent of the change of scale, during the entire duration a feature point is visible in the camera frame.

%


\subsubsection{Varying temporal dimension}

The temporal dimension of the box neighbourhoods, $T$, was varied, and a new set of weights $\mathbf{W}_{PCA}$ and $\mathbf{W}_{SFA}$ were learnt each time.  As Fig.~\ref{fig:all_accs_traffic} (a) shows, the performance is usually worse for lower values of $T$. Increasing $T$ improves the performance of the tracker, as $T =200$ ms returns the highest proportion of correct trackers on average. However, we consider 200 ms to be a large temporal dimension for extraction of spatiotemporal features, and therefore for the subsequent analyses, we set $T=100$ instead.

\subsubsection{Varying the number of PCA projections}
The number of projections $n_{\mathsmaller{PCA}}$ of $\mathbf{W}_{\mathsmaller{PCA}}$ used to extract matches in Algorithm~\ref{alg:point_matching} was varied, and the subsequent accuracy plots obtained with the learned SFA weights were compared.  For this analysis, parameter values chosen were $\Delta T =100 \ ms$ and $M=25$. Fig.~\ref{fig:all_accs_traffic} (b) contains the relevant accuracy plots. Overall, we find that choosing only 2 principal components to obtain matches ($n_{\mathsmaller{PCA}}=2$) results in poor tracking performance. For larger values of $n_{\mathsmaller{PCA}}$ the performance of the tracker improves. However, no significant difference in performance can be observed between $n_{\mathsmaller{PCA}} = 10$ and $n_{\mathsmaller{PCA}}=250$. Beyond that however, we find that performance drops, indicating that using too many PCA components for generating the tracked pairs of feature matches is not advisable. 

%

\subsubsection{Varying the number of SFA projections}
Another free parameter in our algorithm is $n_{\mathsmaller{SFA}}$, the number of slow projections used in Algorithm~\ref{alg:point_tracking}. We compare the accuracy plots with different values of $n_{\mathsmaller{SFA}}$ in Fig. \ref{fig:all_accs_traffic} (c).  Note that a small value of $n_{\mathsmaller{SFA}}=10$ expectedly yields low accuracy ($60\%$ at $t=1.5s$).  A relatively larger value of $n_{\mathsmaller{SFA}}=150$ yields stable performance. Still larger values of $n_{\mathsmaller{SFA}}$ leads to a decrease in accuracy to almost as low as that obtained for $n_{\mathsmaller{SFA}}=10$. As the projections are arranged in ascending order of their slowness, the subsequent projections are noisier and therefore can worsen tracking accuracies.

\subsubsection{Varying temporal partitions}

The number of temporal partitions $M$ was varied. The temporal dimension of the box neighbourhoods was kept unchanged at  $T=100 \ ms$. The accuracy plots were studied for different values of $M$, and are shown in  Fig.~\ref{fig:all_accs_traffic} (d). The lowest number of partitions ($M=5$, 50 FPS) gives a steadily decreasing performance graph. In contrast, higher values around $M=25$ (250 FPS) demonstrate better average performance. A very high number of temporal partitions ($M=100$, 1000 FPS) makes performance suffer noticeably.


\subsubsection{Comparisons with other methods} \label{sec:comparison_traffic}
\begin{figure}[h]
\centering
\includegraphics[height=5cm,keepaspectratio]{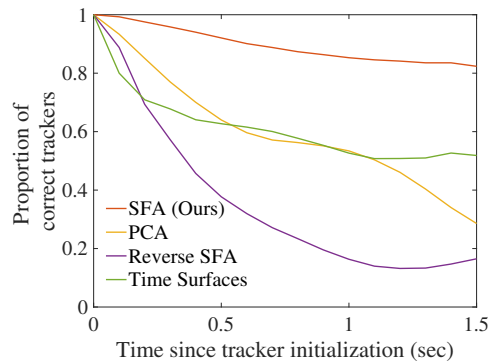}
 \caption{Accuracy plots comparing point tracking performance of our method with other feature estimation strategies.}
 \label{fig:comparisons_babel}
 \end{figure}
We compare our best performing method ($\Delta T=200ms$) with the following ways of generating spatiotemporal features:
 \begin{figure*}[h]%
    \centering
\includegraphics[width=1\textwidth]{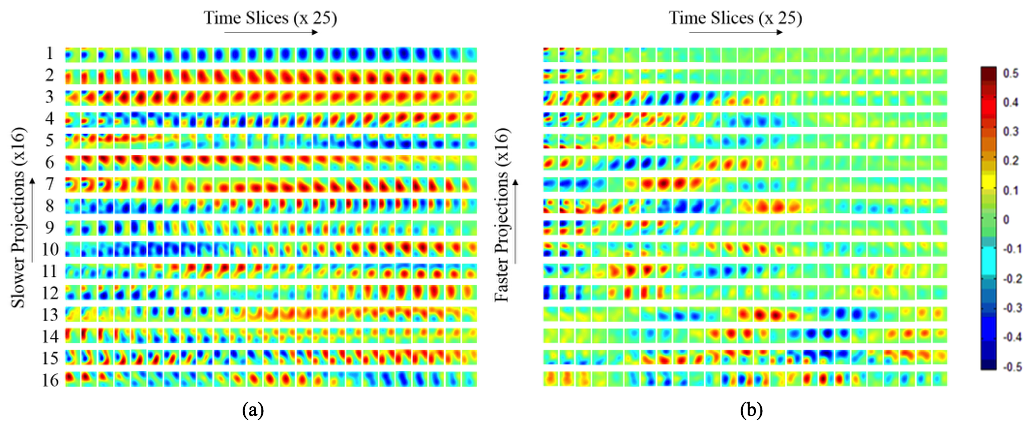}
\caption{Spatiotemporal weights generated by our algorithm: (a) 16 spatiotemporal weight vectors which have the most slowly changing projections, from the weight matrix $W^{\boldsymbol{\sigma}}_{\mathsmaller{SFA}}$ . (b) shows 16 weight vectors with most rapidly changing projections. On a given row, each image is obtained by keeping the temporal index fixed of the 3D weight matrices. Each image represents a time slice of the 3d weight matrix of size $10\times10$. A total of 25 time slices are shown for each weight matrix. The color coding of the pixel values is shown to the right. The images are ordered from left to right w.r.t increasing temporal index.}%
\label{fig:weights_3d}%
\end{figure*}
\begin{itemize}
\item PCA: This is an obvious choice of projections other than our SFA based projections, as it only considers dimensionality reduction. Instead of $\mathbf{W}_{\mathsmaller{SFA}}$, $\mathbf{W}_{\mathsmaller{PCA}}$ was used as the weight matrix in Algorithm~\ref{alg:point_tracking}. The number of projections $n_{\mathsmaller{PCA}}$ was fixed at 150.
\item Reverse SFA: This comparison serves to measure the contrast in performance of the slow weights as opposed to the faster projections. To do this, instead of using the $n_{\mathsmaller{SFA}}=150$ slowest projections to construct $\mathbf{W}_{\mathsmaller{SFA}}$, we take the 150 projections having the highest value of slowness parameter $S(w)$. 
\item Time surfaces: Time surfaces represented as $T_s(x,y,t)$ were used as the feature vectors. All the parameters and steps involved in Algorithm~\ref{alg:point_tracking} were unchanged, except $\mathbf{D}(\delta x, \delta y,\delta t)$ which was changed to $\left(e^{-(t_0-T_s(x_0,y_0,t_0))/ \tau}-e^{-(t_i-T_s(x_i+\delta x,y_i+\delta y,t_i + \delta t))/ \tau} \right )^2$. The parameter $\tau$ was set at $50ms$.  
\end{itemize}
The accuracy plots obtained from all the above methods is shown in Fig.~\ref{fig:comparisons_babel}. SFA based projections obtained from our method achieve better performance than the others tested here. The weights from reverse SFA perform very poorly, as expected. Moreover, time-surfaces and PCA features show very similar performance until $t =1$s, after which the performance of PCA features degrade considerably in relation to time-surfaces.  

\subsubsection{Visualization of the SFA weight matrix} \label{sec:sfa_weights}

The smoothed spatiotemporal weight vectors $\mathbf{W}^{\boldsymbol{\sigma}}_{\mathsmaller{SFA}}$ obtained from the traffic data with parameter values $M=25$ and $\Delta T =100ms$ are shown in Fig.~\ref{fig:weights_3d}. In this example, only the sixteen projections with smallest values of slowness parameter $S(\mathbf{w})$ are shown in Fig.~\ref{fig:weights_3d} (a). Similarly, the sixteen projections with the largest values of their slowness parameters are shown in Fig.~\ref{fig:weights_3d} (b). Each 3D matricized weight vector from the columns of $\mathbf{W}^{\boldsymbol{\sigma}}_{\mathsmaller{SFA}}$ is of size $(10\times10\times25)$, shown in the form of twenty five $(10\times10)$ images. The pixels are color coded with $dark \ blue$ being the most negative weights and $dark \  red$ the most positive. The main observations are summarized below. 
\begin{itemize}
\item The weight vectors with rapidly changing projections exhibit high degree of sparseness compared to the slowly changing projections. As such, the slowly changing and rapidly changing projection weights can be clearly differentiated.  
\item The slowest changing projection weights are shaped as big 3D blobs which count the number of events inside them. Some of them have patterns which translate with time, indicating responsiveness to translation. 
\item In Fig.~\ref{fig:weights_3d} (a), most weights show smoothly shifting patterns across time. Some of them have corner-like shapes (4, 7 and 13). A few weights respond to event rate changes (10 and 12), indicating that they encode whether the feature point is moving towards or away from the camera (corresponding to vehicles moving in either direction). There are also weights which respond to specifically oriented edges moving in a certain direction (6 and 8). These properties indicate similarities of these weights to visual cortical receptive fields of different kinds (simple, complex and hypercomplex \cite{hubel_godly}). 
\end{itemize}

\subsection{Grid Pattern Rotation (translation and rotation)}
\begin{figure*}[h]
\centering
\includegraphics[width=0.7\textwidth]{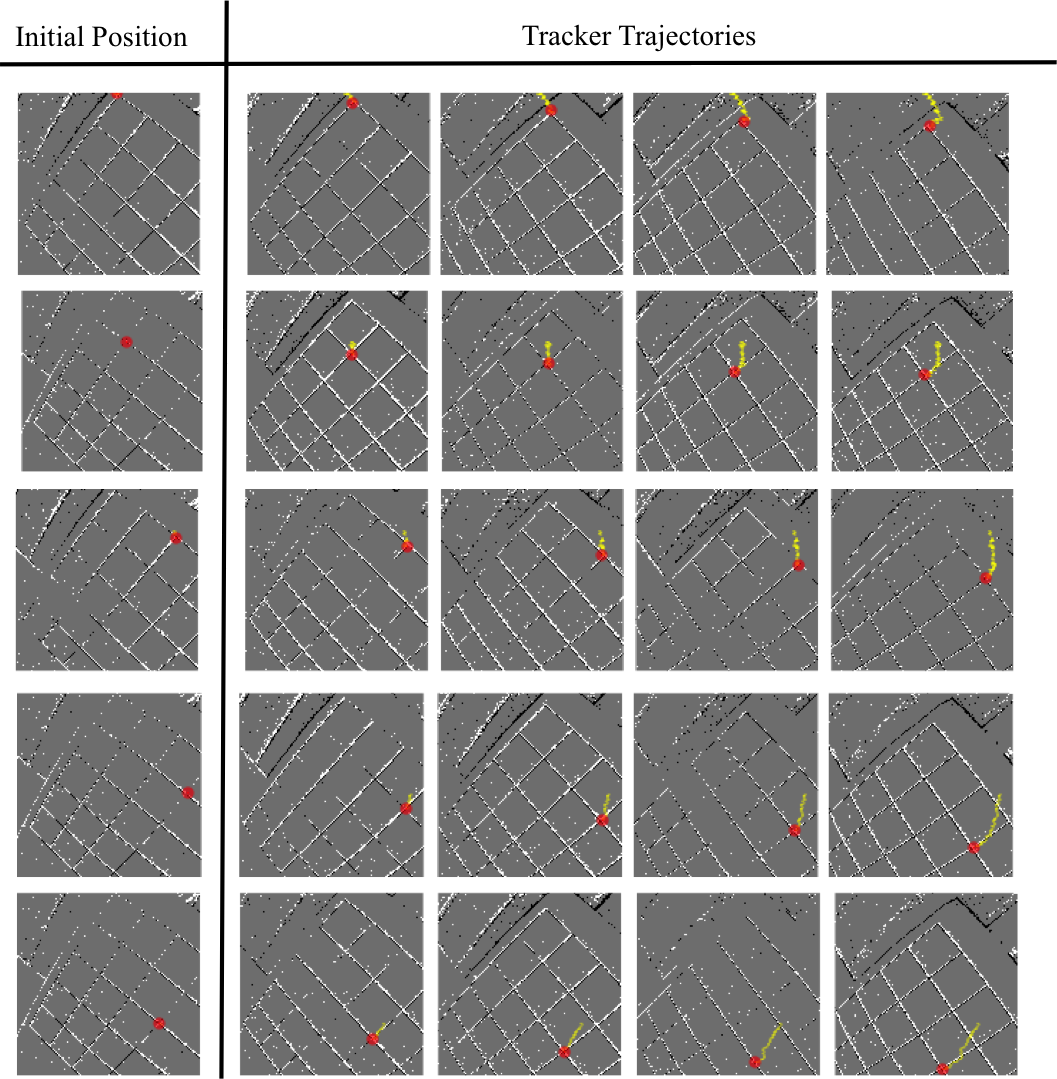}
\caption{Tracker paths estimated by the algorithm for a grid pattern rotation data.}
\label{fig:rotation_results}
\end{figure*}
\begin{figure}[h]
\centering
\includegraphics[width=0.48\textwidth]{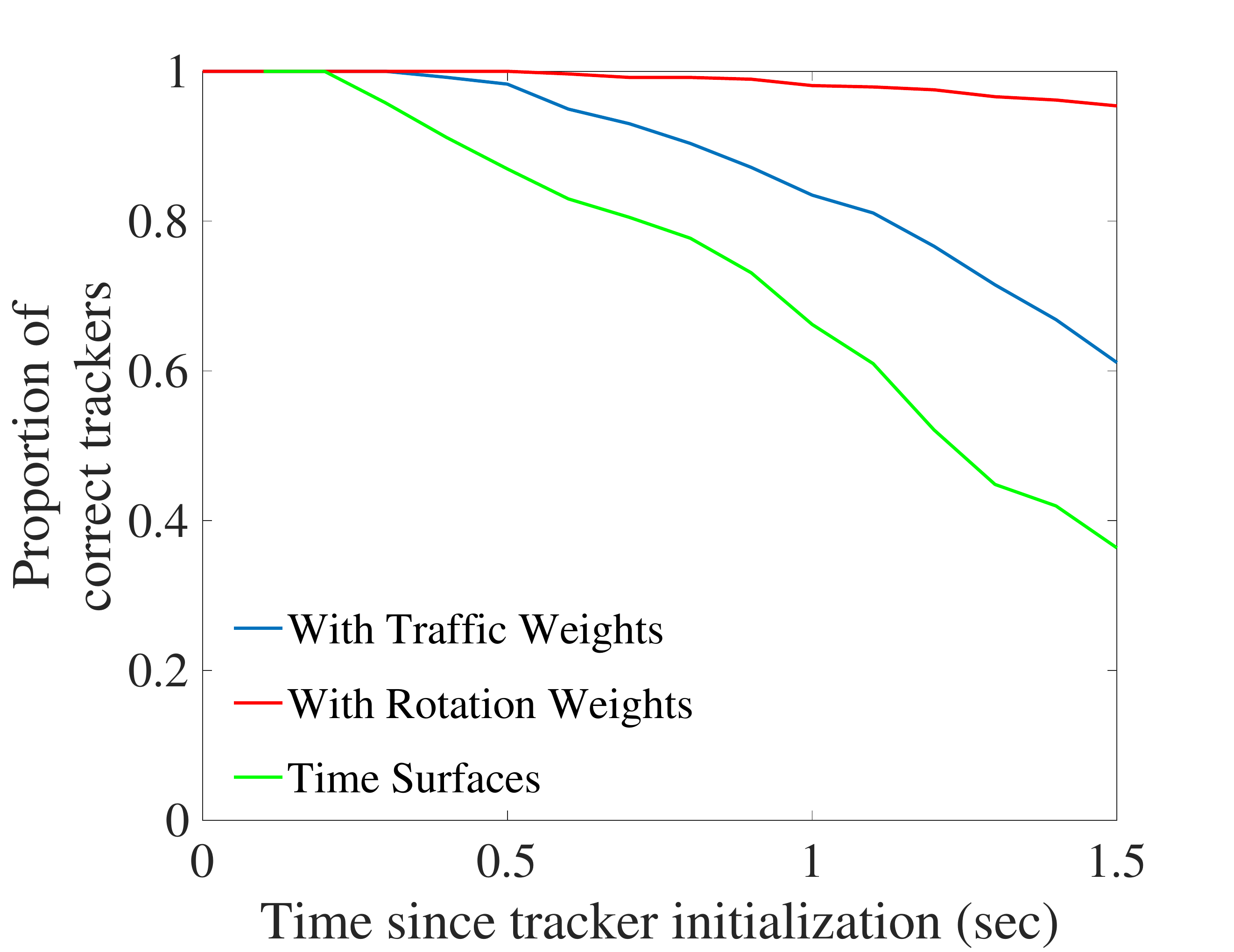}
\caption{Comparison of the accuracy plots obtained with our spatiotemporal weights learnt from the rotation data itself, the spatiotemporal weights learnt from traffic data and time-surface based features.}
\label{fig:rotation_results_comparison}
\end{figure}

A $5 \times 5$ square grid of length (this many cms) was statically placed in front of a handheld DVS camera. The camera was simultaneously rotated and translated in a near-smooth motion from left to right, at all times facing the grid pattern. Spatiotemporal weights were learnt from scratch with this new data, from which spatiotemporal features were derived. The weight matrices were of size $(10 \times 10 \times 50)$, spread over $100$ ms, and therefore each individual voxel was of size $(1 \times 1 \times 2 \ ms$). 
The objective of this experiment was to demonstrate that complex rotation invariant features can also be learnt by the system, if trained on appropriate data. The hand-held camera rotated for approximately 30 degrees during that time. We found that 30 degrees of rotation was enough to discriminate the performance between rotation-invariant features and earlier translation/scale invariant features from the traffic data. \footnote{Note that the objective here is not to learn features which show complete 360 degree rotational stability to visual patterns, but to more realistic rotational distortions induced by hand-held motion.} The corner points and the intersections on the grid were chosen for the annotated trajectories (a total of 22 trajectories).

We analyze the tracker performance (Fig. \ref{fig:rotation_results}), and compare it with the following approaches:
\begin{itemize}
\item Spatiotemporal weights learnt from traffic data: The spatiotemporal weights, $\mathbf{W}_{SFA}$, obtained from the traffic data, were used for point tracking instead of the weights learned from the grid rotation data itself.
\item Time Surfaces: As explained in section \ref{sec:comparison_traffic}, time surface based feature representations were used for point tracking, keeping all the other algorithm parameters fixed. 
\end{itemize}

The accuracy plots obtained from each method is shown in Figure \ref{fig:rotation_results_comparison}. Notice that the spatiotemporal weights learnt from traffic data still outperforms the time-surface based feature matcher. More importantly, the new spatiotemporal weights learnt form the grid rotation data itself is able to achieve robust performance (near 95\% over 1.5 seconds), outperforming the compared methods. This highlights that the new features are indeed able to capture rotational invariance much more than the previously learnt traffic features, demonstrating the adaptability of the features learnt to the data provided. 

    \subsection{Cross dataset tracking performance for MNIST} 


Here we analyze tracking performance when the spatial patterns are the same (roughly), but the motion patterns are different across datasets. The main goal is to observe the cross dataset performance, when weights trained from a certain motion pattern are used for tracking in the other dataset which contains a different motion pattern. 
Two sets of digits were printed from the MNIST dataset were printed, and a UR10 robot with a DVS fitted on the end effector was made to move in two different motion profiles. Hence, two sets of training and testing data were recorded with the DVS, each consisting of the digits moving in a certain motion profile. The two motion profiles used are linear motion and triangular wave motion, as demonstrated in Fig. \ref{fig:mnist_all} (b). From the training data for each motion profile let us denote the learnt spatiotemporal weight matrices as matrices $\mathbf{W}_{(linear)}$ and $\mathbf{W}_{(triangular)}$. For the tests, both sets of weights were used to validate tracking performance on each dataset (linear and triangular). The parameters used in this experiment were $n_{\mathsmaller{PCA}}=10$, $n_{\mathsmaller{SFA}}=150$, $M=25$ and $\Delta T=100ms$. 
\begin{figure}[h]%
\centering
\includegraphics[width=0.48\textwidth]{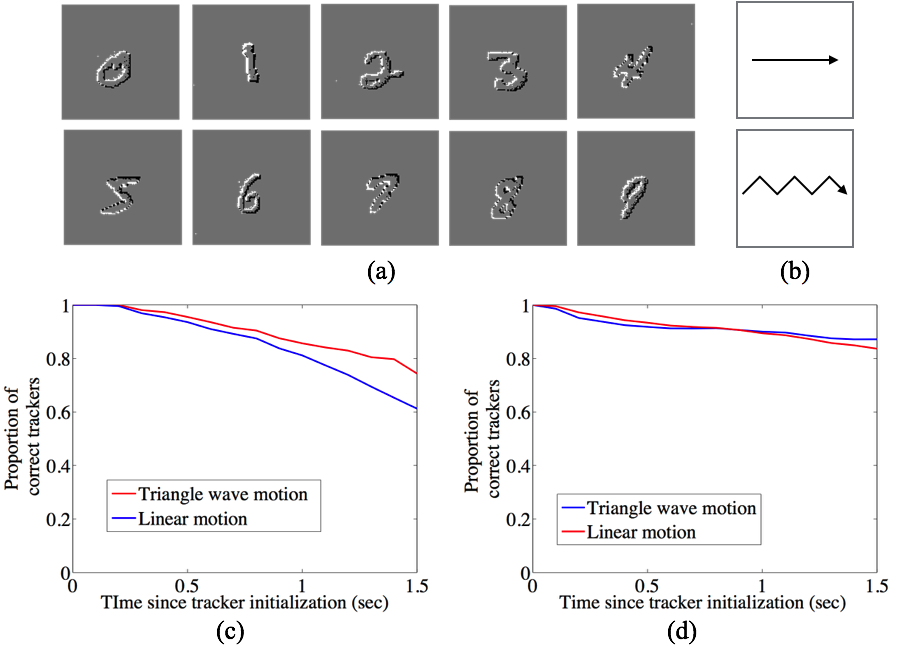}
\caption{Example motion profiles for the MNIST digits and the square corner tracking experiments. (a)  Linear motion profile and (b) triangle-wave motion profile of an example digit from the MNIST dataset is shown. The top and bottom images constitute the initial and final locations of each digit respectively, with its trajectory overlaid in green. Similarly shown in (c) are the initial and final locations of the square along with the superimposed corner trajectories.}%
\label{fig:mnist_all}
\end{figure}

The results are shown in Fig.~\ref{fig:mnist_all} (c) and Fig.~\ref{fig:mnist_all} (d). From the plots, we notice that the cross-dataset tracking performance only suffers a little, despite two different motion profiles. It is because the spatial patterns remain consistent across the two datasets. However, as expected there is a noticeable increase in tracking accuracy when the trained weights are applied to their own datasets. Additionally, we observe that the cross-dataset performance of the triangle wave motion trained weights eventually matches the linear motion weights (as shown in Fig.~\ref{fig:mnist_all} (c)) on the linear motion data itself. This can be explained as follows. Recall that the DVS only responds to moving edges, with greater spike-events generated when the edge moves along its normal direction. The triangular wave motion data clearly has a wider range of motion direction than the linear motion data. Therefore, $\mathbf{W}_{(triangular)}$ is subject to input where more edge information is revealed than its counterpart $\mathbf{W}_{(linear)}$. Naturally $\mathbf{W}_{(triangular)}$ learns to better encode the spike-event patterns than its counterpart.

\subsection{Intermittent motion}
Here we simply demonstrate the capability of the method to be robust to intermittent pauses in the motion of a feature point. For the experiment, a square shaped stimulus was translated in a piecewise linear curve. As the square moved, it was paused for $0.1$ seconds at $t=0.5s$ and $t=1s$ before changing motion direction. The stopping criterion used in Algorithm~\ref{alg:point_tracking} is pivotal to ensure that the tracker stays put as all motion ceases. Fig.~\ref{fig:stopping_test}  (a) shows the start and end positions of the square superimposed with the point trajectories. To compare, we quantify tracking performance w.r.t the average tracker displacement which is defined as follows. 
\begin{equation} 
 dist(t)=\frac{\sum_{i,T_{max}^i>t}\sqrt{(x_{\small{g}}^i(t)-x_{\small{e}}^i(t))^2 + (y_{\small{g}}^i(t)-y_{\small{e}}^i(t))^2}}{\underset{i}{\#}  \left\{T_{max}^i>t\right\}}
\end{equation}

Note that tracking performance does not suffer for the algorithm without the stopping criterion until $t=0.5s$. For $t>0.5$, the average displacement of the trackers without the stopping criterion increases rapidly, eventually reaching $20$ pixels. The tracker with the stopping criterion however maintains performance as desired. 
\begin{figure}[h]%
    \centering
\includegraphics[width=0.5\textwidth]{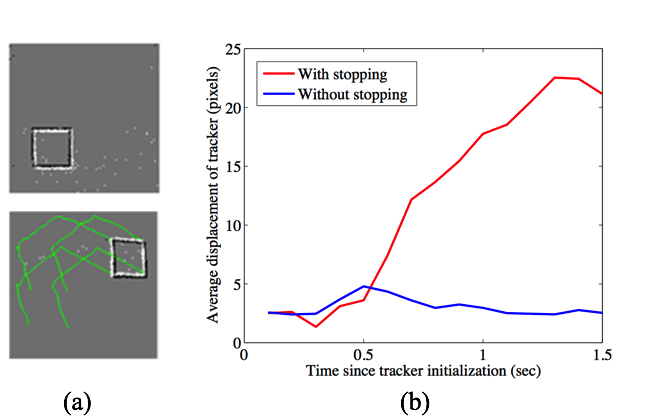}
\caption{Accuracy plots comparing Algorithm~\ref{alg:point_tracking} performance with the stopping condition and without. }%
\label{fig:stopping_test}
\end{figure}

\section{Discussion} \label{sec:discussion}

Feature extraction algorithms for frame-based input has been either general or dataset oriented (learned from the data) \cite{sparse_spatiotemporal}. A similar trend is also observed in the event-based scenario, wherein features are either defined to be general i.e. data-independent \cite{motion_feature_event,hfirst}, or tuned to the specific dataset it is trained on \cite{hots_garrick,echostate_features,invariant_thusitha}. To our knowledge, the proposed method here is the first example of an event-based feature learning algorithm which targets spatiotemporal invariance in the feature representations, fine-tuned to the spatiotemporal context of the training dataset. As evidenced by the point tracking results on traffic and rotation data (Fig. \ref{fig:rotation_results_comparison} and Fig. \ref{fig:comparisons_babel}), we find that our method is able to learn invariant spatiotemporal features fine-tuned to the dataset it has been trained on. In doing so, the method is able to obtain robust tracking performance on the respective datasets. The adaptability of the features is clearly demonstrated through Fig. \ref{fig:rotation_results_comparison}, where features learnt from grid rotation data, which contains mostly translation and rotational motion, shows considerably higher performance than the features generated from the spatiotemporal weights learned from the traffic counterpart. This adaptability essentially points to the invariant aspects of the learnt spatiotemporal features, which in our case involves translation, scale changes and rotation across the two datasets. Moreover, on the grid rotation data, features learnt from traffic are found to outperform time-surface based features, which are general. This implies that features trained on a specific dataset are not overfitted to the data, and can generalize better than time-surface based features, which are data-independent. 

Spike-event data is inherently sparse, including the spike event count matrix. However, since such a sparse representation of the spike-event pattern is not robust to slight changes in the spike-event locations, we add a gaussian smoothing step. Theorem 1 shows us that instead of smoothing the spke-event count matrix, we can smooth the spatiotemporal weights. There are two implications of this important result. First, this allows the method to have a completely event-based asynchronous implementation in practice. This is because, the new set of smoothed spatiotemporal weights can be projected on the sparse spike event count matrix, requiring multiplications only at voxels containing a non-zero number of spike-events. This in turn indicates that the output of a spatiotemporal projection can be updated with each incoming spike-event. The second implication of that theorem, is that the resulting smoothed spatiotemporal projections will show inherent spatiotemporal smoothness in their values. Note that this is achieved without enforcing any other conditions on the optimization function, such as a smoothness constraint on the weights. 

The sparse, high-dimensional spike-event data is essentially mapped onto a dense, low-dimensional representation in the first part of our method. This serves well for two purposes. First, this allows the initial feature representation to be robust to slight changes in the spike-event pattern. This follows as an obvious effect of smoothing the spike-event count matrix and the PCA based dimensionality reduction step. Second, using a compressed representation allows the method to extract rudimentary feature matches based on simpler event-distribution statistics using the PCA . Note that we do not need the feature matches to be too precise and accurate in the subsequent step, as we expect the new SFA based learner to incorporate translational and other transformational robustness irrespective of the quality of the feature matches. This is evidenced by the result shown in Fig.~\ref{fig:all_accs_traffic} (b), where we find that using too many PCA projections to generate the feature matches proves detrimental to the final tracking accuracy. In fact using only 10 projections is sufficient to obtain robust SFA based projections, with near-best accuracy. 

There are a number of takeaways from the comprehensive analysis of tracking performance, in response to parameter variations, as shown with the traffic dataset. First, we find that the feature quality improves when using more spike-timing precision, but decreases when being too precise. For example, in  Fig.~\ref{fig:all_accs_traffic} (d), one observes up to a 10\% increase in performance when choosing 25 partitions (250 FPS), as opposed to using only 5 partitions (50 FPS). This indicates that the additional temporal precision helped. However, increasing beyond 100 partitions (1000 FPS) leads to a sudden drop in performance, which indicates that reading too much into the spike times (which can be up to 1 microsecond detail) is not advisable. A similar finding was reported in (ryad spike-timing precision), where spike-time precision beyond 1 ms negatively affected performance. 

Second, for the traffic dataset, we observe a steady performance improvement when using longer temporal windows in the construction of the spatiotemporal features. Longer time windows implies longer duration of motion, and therefore the features extracted will prioritize encoding motion information, when compared with shorter time windows. Clearly, since the motion of the feature points in the traffic data is usually consistent and smooth, prioritizing motion information will be helpful for a tracker to identify a feature point's trajectory (and thus do better feature matching). 

Third, the performance comparison analysis shown in \ref{fig:comparisons_babel} demonstrates that SFA based features perform considerably better than PCA based features. Note that our method finds projections which are informative and slowly changing across feature matches, as opposed to PCA, whose only criteria is to select the most informative projections. The additional \textit{slowness} aspect of our algorithm leads to the preferential learning of weights which have more stable responses across the spatiotemporal domain of events, and therefore are a more reliable identifier of a feature point. This it does by building feature tolerance to the local visual transformations present across the training data (Feature matching step).   

Evidence gathered from our experiments primarily shows that our feature learning algorithm is capable of learning specialized features towards the spatiotemporal patterns of motion present in the training data. In future, an important aspect of further investigation would be to replace the Gaussian smoothing step with smoothness constraints on the weights. Much like the motivation behind the gaussian smoothing, which is to retain topological smoothness of the projection values, these constraints will hope to achieve the same. In addition, such constraints might possibly preserve sharper transitions (edges) in the values of the weight matrices, which is not possible with the Gaussian smoothing step.  

The current methodology described in the paper essentially learns a mapping from the spike-event domain to a feature domain. For feature tracking purposes, this method needs the location of the feature points to be specified, in order to track them. To address this issue, such a method could be complemented with an additional corner point detection algorithm, like the one in \cite{fast_corner}, to make it self-sufficient, and more effective at generating feature matches. 

%
%

\section{Conclusion} \label{sec:conclusion}
A methodology for learning spatiotemporal features directly from event-data has been proposed. The feature learning criteria enforced are (i) smooth responses to individual spike-event perturbations, and (ii) slowly changing responses to change of spatio-temporal location attributing to re-appearance of feature points. We evaluated the obtained feature representations by the track-ability of feature points across multiple event-data recordings. The method outperforms other ways of obtaining spatiotemporal features, including time surface based representations. The features are found to adapt their robustness to the specific spatiotemporal transformations present in the training data (translation, rotation and scaling), while simultaneously showing good generalisation across datasets containing different motion patterns. Such desirable properties enable our method to find future applications in action recognition and feature matching applications.

\bibliographystyle{IEEEtran}
\bibliography{main}

%
%

%
\begin{IEEEbiography}[{\includegraphics[width=1in,height=1.25in,clip,keepaspectratio]{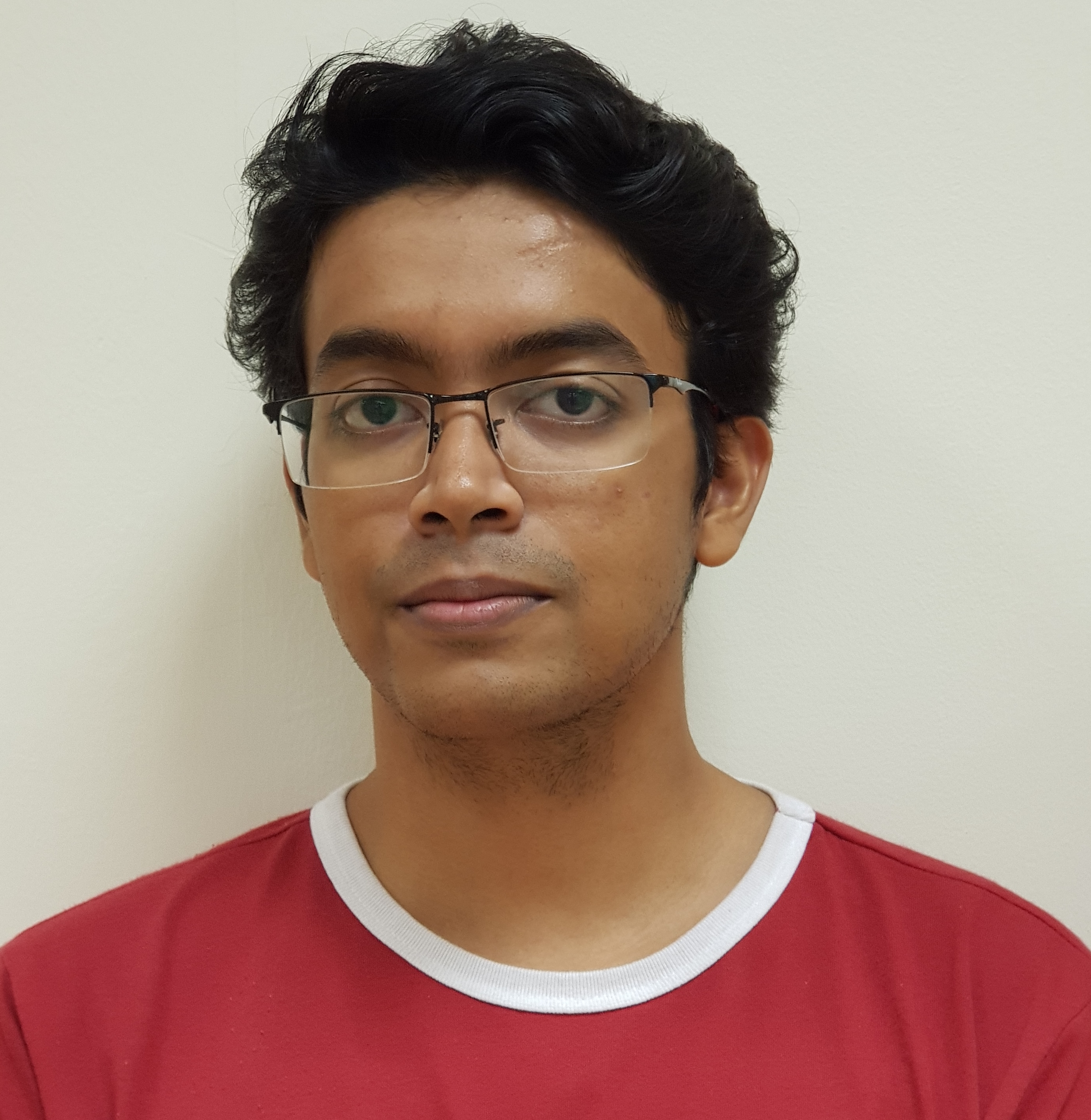}}]{Rohan Ghosh}
received the B.Tech. and M.Tech. degrees in Electronics and Electrical Communication Engineering from Indian Institute of Technology, Kharagpur, India in 2013 and 2014 respectively. He is currently working towards the PhD degree in Neuromorphic Vision in the Department of Electrical and Computer Engineering at National University of Singapore, Singapore. 
\end{IEEEbiography}
\begin{IEEEbiography}[{\includegraphics[width=1in,height=1.25in,clip,keepaspectratio]{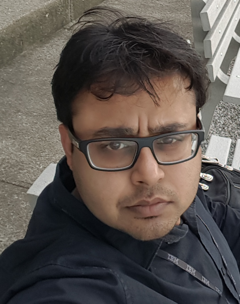}}]{Anupam K. Gupta}
received the B.Tech. degree in Mechanical Engineering from Institute of Engineering and Technology, Lucknow, India, in 2008, M.Sc. in Mechanical Engineering from Masachussetts Institute of Technology, Cambridge, USA, and Nanyang Technological University, Singapore in 2010 and PhD degree in Mechanical Engineering from Virginia Tech, Blacksburg, USA in 2016. He is currently a postdoctoral research fellow in the Singapore Institute for Neurotechnology (SINAPSE), National University of Singapore where his research focuses on bio-inspired sensing and algorithms for Vision and Tactile modalities. He has previously worked in IBM Research, Yorktown, N.Y., USA on bio-inspired speech recognition. 

\end{IEEEbiography}

\begin{IEEEbiography}[{\includegraphics[width=1in,height=1.25in,clip,keepaspectratio]{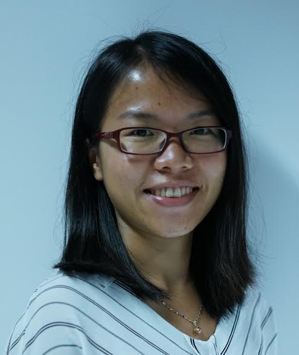}}]{Siyi Tang}
received the B.Eng. degree in Electrical Engineering from National University of Singapore in 2016. She was a recipient of National University of Singapore Science and Technology Undergraduate Scholarship.
\end{IEEEbiography}

\begin{IEEEbiography}[{\includegraphics[width=1in,height=1.25in,clip,keepaspectratio]{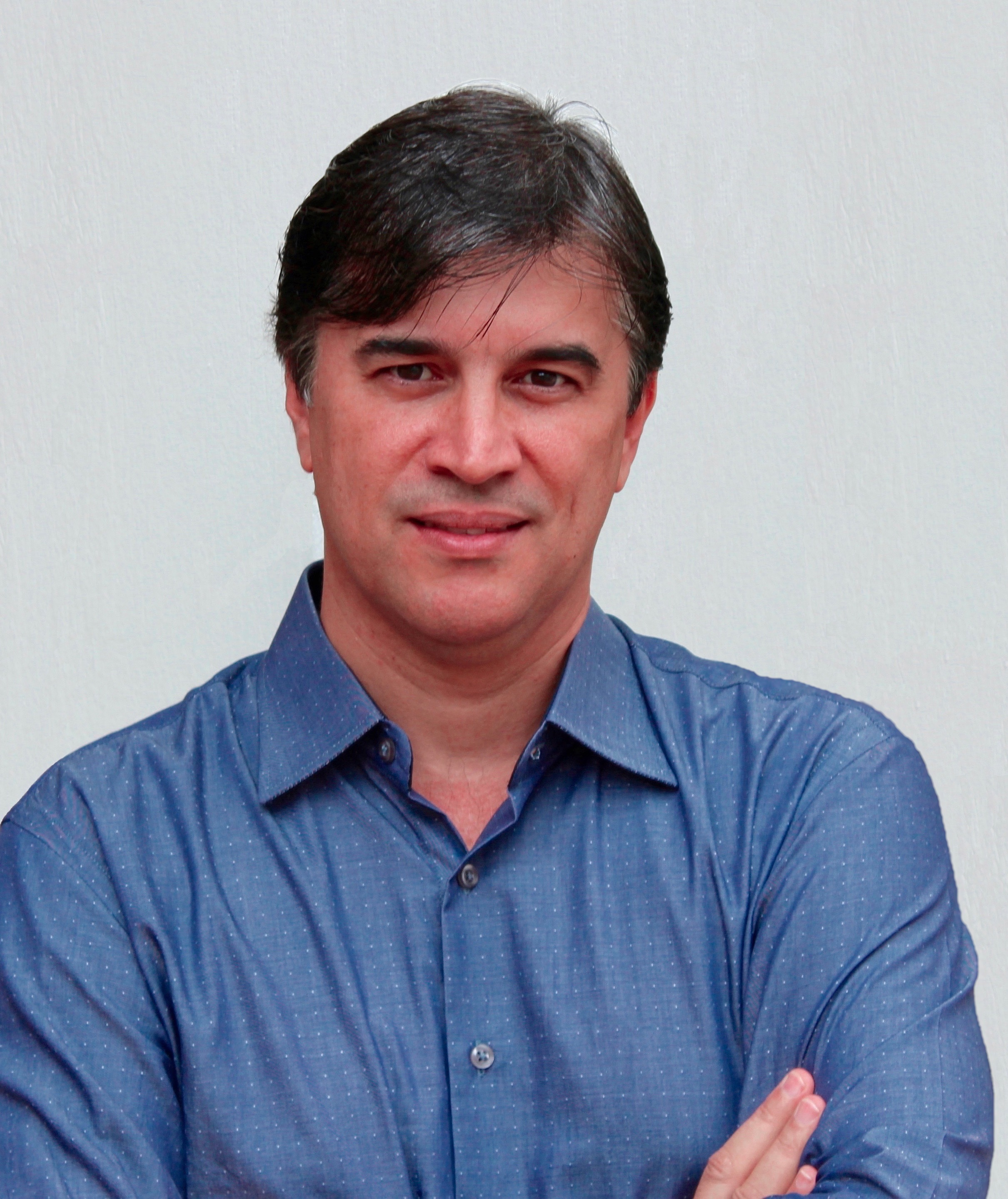}}]{Alcimar B. Soares}
Alcimar B. Soares received the B.S. degree in Electrical Engineering at the Federal University of Uberlândia, Brazil, in 1987, where he also concluded his MSc in the area of Artificial Intelligence in 1990. He received his PhD in Biomedical Engineering at the University of Edinburgh, UK, in 1997, and completed a research fellowship at the department of Biomedical Engineering of the Johns Hopkins University, USA, in 2014. Has was the Head of the Graduate Program of Electrical Engineering from 2002 to 2004, Head of the Faculty of Electrical Engineering from 2004 to 2008, and Dean for Research and Graduate Studies of the Federal University of Uberlândia from 2008 to 2012. He also was the Head of the Committees for the creation of the Biomedical Engineering Undergraduate and Graduate Programs of the Federal University of Uberlândia in 2005 and 2012, respectively. Dr. Soares is currently a Full Professor and Head of the Biomedical Engineering Lab at the Faculty of Electrical Engineering of the Federal University of Uberlândia, Brazil. He is the Editor-in-Chief of the Research on Biomedical Engineering journal, Deputy Editor of the Medical and Biological Engineering and Computing journal, Associate-Editor of the Journal of Biomedical Engineering and Biosciences and Associate-Editor of the journal Innovative Biomedical Technologies and Health Care. He is also member of various scientific societies, such as the IEEE Engineering in Medicine and Biology Society, the Brazilian Society of Biomedical Engineering, the Brazilian Society of Electromyography and Kinesiology and the International Society of Electromyography and Kinesiology. His research interests include modeling and estimation of neuromotor control systems, large scale neural systems dynamics, targeted neuroplasticity, decoding neural activity, brain machine interfaces and rehabilitation and assistive devices.
\end{IEEEbiography}


\begin{IEEEbiography}[{\includegraphics[width=1in,height=1.25in,clip,keepaspectratio]{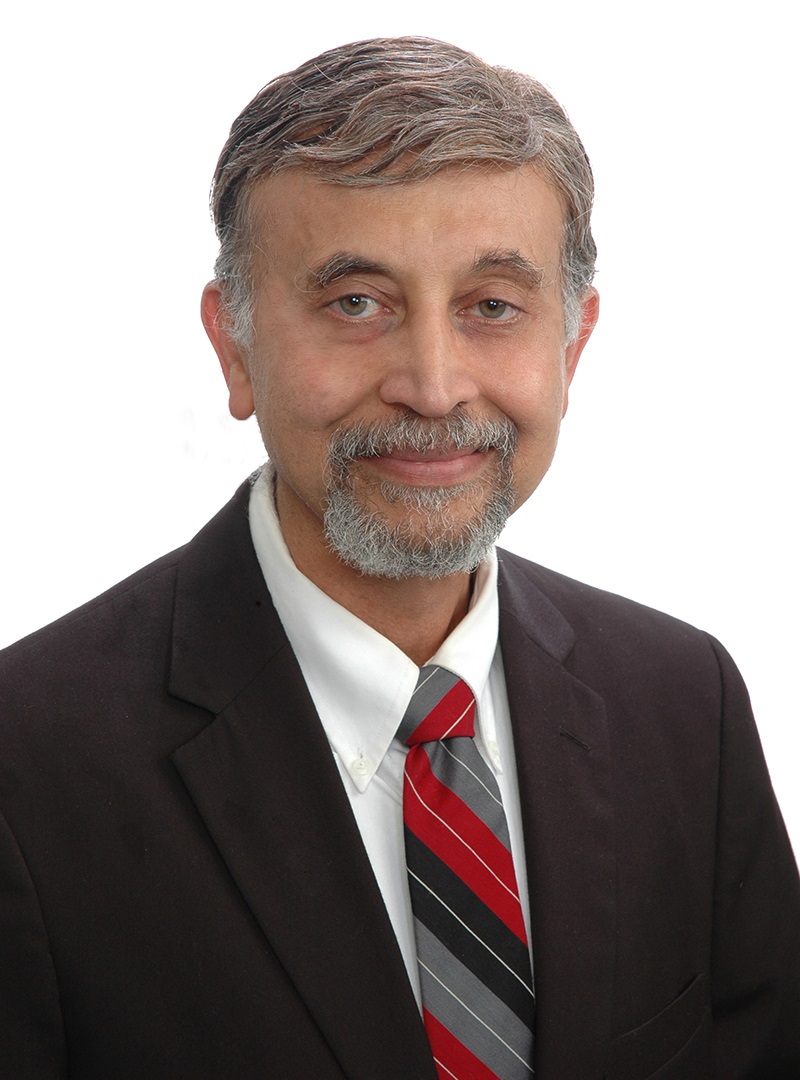}}]{Nitish V. Thakor}
is the Director of the Singapore Institute for Neurotechnology (SINAPSE) at the National University of Singapore, as well as the Professor of Biomedical Engineering at Johns Hopkins University in the USA.  Dr. Thakor's technical expertise is in the field of Neuroengineering, where he has pioneered many technologies for brain monitoring to prosthetic arms and neuroprosthesis.  He is the author of 340 refereed journal publications, 20  patents, and co-founder of 3 companies. He is currently the Editor in Chief of Medical and Biological Engineering and Computing, and was the Editor in Chief of IEEE TNSRE from 2005-2011. Dr. Thakor is a recipient of a Research Career Development Award from the National Institutes of Health and a Presidential Young Investigator Award from the National Science Foundation, and is a Fellow of the American Institute of Medical and Biological Engineering, IEEE, Founding Fellow of the Biomedical Engineering Society, and Fellow of International Federation of Medical and Biological Engineering.  He is a recipient of the award of Technical Excellence in Neuroengineering from IEEE Engineering in Medicine and Biology Society, Distinguished Alumnus Award from Indian Institute of Technology, Bombay, India, and a Centennial Medal from the University of Wisconsin School of Engineering.
\end{IEEEbiography}

\newpage

\appendix

\begin{proof}[Proof of Theorem \ref{thm:circconv}]
The given 3D matrices $\mathbf{A}$, $\mathbf{B}$, and $\mathbf{C}$ are zero padded to make their size equal. Using the associative property of convolutions we can write,
\begin{equation*}
\mathbf{A} \ast \mathbf{B} \ast \mathbf{C} = \mathbf{A} \ast \mathbf{C} \ast \mathbf{B}.
\end{equation*}

Let us consider 3D matrices $\mathbf{Y}_1,\mathbf{Y}_2,\mathbf{Y}_3$, each of size $(l_1 \times l_2 \times l_3)$, such that $\mathbf{Y}_3=\mathbf{Y}_1 \ast \mathbf{Y}_1$. The central element of $\mathbf{Y}_3$ can be simply written as

{\smaller
\begin{align*}
 \mathbf{Y}_3\left( \frac{l_1}{2}, \frac{l_2}{2}, \frac{l_3}{2}\right)&=\displaystyle\sum_{i,j,k}^{l_1,l_2,l_3} \mathbf{Y}_1\left( \frac{l_1}{2}-i, \frac{l_2}{2}-j, \frac{l_3}{2}-k\right)\mathbf{Y}_2\left(i,j,k\right) \\
 &=sum\left(\left(\sim \mathbf{Y}_1\right) \circ \mathbf{Y}_2\right) \numberthis  \label{eq:proof1}
\end{align*} 
}
where  $\left(\sim \mathbf{Y}_1\right)$ as the mirror image matrix of $\mathbf{Y}_1$, obeying $\left( \sim \mathbf{Y_1} \right)(i,j,k)=\mathbf{Y}_1\left( \frac{s_1}{2}-i, \frac{s_2}{2}-j, \frac{s_3}{2}-k\right)$. Observe that $\left(\sim (\sim \mathbf{Y}_1) \right)=\mathbf{Y}_1$. Conversely, $sum\left(\mathbf{Y}_1 \circ \mathbf{Y}_2\right)$ is the central element of the convolution  $(\sim \mathbf{Y}_1 ) \ast \mathbf{Y}_2$. 

Therefore, $sum\left(\mathbf{A} \circ (\mathbf{B} \ast \mathbf{C}) \right)$ is essentially the central element of the matrix $ \left( \sim \mathbf{A} \right) \ast (\mathbf{B} \ast \mathbf{C})$. Using the associative property of convolutions, this matrix can be rewritten as $ \left( \sim \mathbf{A} \ast \mathbf{C} \right)  \ast \mathbf{B}$. Applying~(\ref{eq:proof1}), its central element can simply be obtained as $sum\left(\left(\sim \left( \sim \mathbf{A} \ast \mathbf{C} \right) \right) \circ \mathbf{B} \right)$. 

It is trivial to show that $\sim(\mathbf{A} \ast \mathbf{B})=(\sim \mathbf{A}) \ast (\sim \mathbf{B})$. Using this property we continue with the previous steps as follows
\begin{align*}
sum\left(\mathbf{A} \circ (\mathbf{B} \ast \mathbf{C}) \right) &= sum\left(\left(\sim \left( \sim \mathbf{A} \ast \mathbf{C} \right) \right) \circ \mathbf{B} \right) \\
&=sum\left(\left( \left(\sim \sim \mathbf{A}\right) \ast \left( \sim \mathbf{C} \right) \right) \circ \mathbf{B} \right) \\
&=sum\left(\left( \mathbf{A}  \ast  \mathbf{C} \right) \circ \mathbf{B} \right) .
\end{align*} 
The last step substitutes $\sim \mathbf{C} =\mathbf{C}$ because $\mathbf{C}$ is the 3D Gaussian distribution kernel matrix. 

Note that this result holds for any $\mathbf{C}$ which is equal to its mirror matrix $\sim \mathbf{C}$. A more general result is 
\begin{equation}
sum\left(\mathbf{A} \circ (\mathbf{B} \ast \mathbf{C}) \right) =  sum\left(\left(\mathbf{A}  \ast  (\sim \mathbf{C}) \right) \circ \mathbf{B} \right).\qedhere
\end{equation} 
\end{proof}





\end{document}